\definecolor{softred}{rgb}{.933,.4,.46.7}
\definecolor{limegreen}{rgb}{.133,.533,.2}
\definecolor{moderateblue}{rgb}{.267,.467,.667}
\pgfplotsset{compat=newest}
\tikzset{use path/.code=\tikz@addmode{\pgfsyssoftpath@setcurrentpath#1}}
\NewDocumentCommand{\bisettrice}{%
	O{}     
	mmm     
	m       
	O{1}O{1}
}{%
	\path[name path=Bis#2#3#4] let
	\p1 = ($(#2) - (#3)$),
	\p2 = ($(#4) - (#3)$),
	\n1 = {veclen(\x1,\y1)} ,
	\n2 = {veclen(\x2,\y2)} ,
	\n3 = {max(\n1,\n2)},
	\p1 = ($(#3)!\n3!(#2)$),
	\p2 = ($(#3)!\n3!(#4)$),
	\p3 = ($(\p1) + (\p2) - (#3)$)
	in
	(#3) -- (\p3) ;
	
	\path[name path = foo] (#2)--(#4) ;
	
	\path[name intersections={of=foo and Bis#2#3#4, by=#5}] ;
	
	\path[#1] ($(#3)!#6!(#5)$) -- ($(#5)!#7!(#3)$) ;
}
\newcommand{\eps}{\epsilon}
\newcommand{\la}{\lambda}
\newcommand{\ga}{\gamma}
\newcommand{\ka}{\kappa}
\newcommand{\al}{\alpha}
\newcommand{\de}{\delta}
\newcommand{\R}{\mathbb{R}}
\newcommand{\I}{\mathbbm{1}}
\let\P\relax \newcommand{\P}{\mathbb{P}}
\newcommand{\E}{\mathbb{E}}
\newcommand{\rr}{\mathcal{R}}
\newcommand{\xx}{\mathcal{X}}
\newcommand{\yy}{\mathcal{Y}}
\newcommand{\zz}{\mathcal{Z}}
\newcommand{\hh}{\mathcal{H}}
\DeclareMathOperator{\sign}{sign}
\DeclareMathOperator*{\argmax}{arg\,max}
\DeclareMathOperator*{\argmin}{arg\,min}
\DeclareMathOperator{\J}{J\!}
\DeclareMathOperator{\co}{co}
\newcommand{\h}[1]{\widehat{#1}}
\theoremstyle{definition}
\newtheorem{dfn}{Definition}
\theoremstyle{remark}
\theoremstyle{plain}
\newtheorem{lem}[dfn]{Lemma}
\newtheorem{prop}[dfn]{Proposition}
\newtheorem*{prop*}{Proposition}
\newtheorem{thm}[dfn]{Theorem}
\newtheorem*{thm*}{Theorem}
\title{\bf Multiclass learning with margin: \\
exponential rates with no bias-variance trade-off}
\date{}
\newcommand{\printfnsymbol}[1]{%
  \textsuperscript{\@fnsymbol{#1}}%
}
\author{%
 {\bf Stefano Vigogna} \hfill \hspace{15em}{\small\texttt{vigogna@dibris.unige.it}}\\
 {\small \it MaLGa - DIBRIS, University of Genova, Italy \hfill \hspace{12em}}\\ 
  \and
  {\bf Giacomo Meanti} \hfill \hspace{13.2em}{\small\texttt{giacomo.meanti@edu.unige.it}}\\
  {\small \it MaLGa - DIBRIS, University of Genova, Italy \hfill \hspace{12em}}\\   
  \and
  {\bf Ernesto De Vito} \hfill \hspace{15.4em}{\small\texttt{ernesto.devito@unige.it}}\\
  {\small \it MaLGa - DIMA, University of Genova, Italy \hfill \hspace{12em}}\\   
  \and
  {\bf Lorenzo Rosasco} \hfill \hspace{14.5em}{\small\texttt{lorenzo.rosasco@unige.it}}\\
  {\small \it MaLGa - DIBRIS, University of Genova, Italy  \hfill \hspace{12em}}\\
  {\small \it Istituto Italiano di Tecnologia, Genova, Italy \hfill \hspace{12em}}\\
  {\small \it CBMM - MIT, Cambridge, MA, USA \hfill \hspace{12em}}
}
\begin{document}

\maketitle

\begin{abstract}
We study the behavior of error bounds for multiclass classification under suitable margin conditions. For a wide variety of methods we prove that the classification error under a hard-margin condition decreases exponentially fast without any bias-variance trade-off. Different convergence rates can be obtained in correspondence of different margin assumptions. With a self-contained and instructive analysis we are able to generalize known results from the binary to the multiclass setting. 
\end{abstract}

\section{Introduction}
\label{sec:intro}

It was recently remarked that the learning curves observed in practice can be quite different from those predicted in theory \cite{zhang2021understanding}.
In particular, while one might expect performance to degrade as models get larger or less constrained \cite{hastie2009}, this is in fact not the case.
By the no free lunch theorem \cite{wolpert1996lack},
theoretical results critically depend on the set of assumptions made on the problem.
Such assumptions can be hard to verify in practice,
hence a possible way to tackle the seeming contradictions in learning theory \emph{vs.}~practice is to consider a wider range of assumptions,
and check whether the corresponding results can explain empirical observations.

In the context of classification, it is interesting to consider assumptions describing the difficulty of the problem in terms of {\em  margin}~\cite{Tsybakov1999,Tsybakov2004}.
It is well known that very different learning curves can be obtained depending on the considered margin conditions \cite{bartlett2006convexity}. Further,
the behavior of the test error in terms of misclassification can be considerably different from that induced by the surrogate loss function used for empirical risk minimization \cite{Zhang2004,bartlett2006convexity}.
An extreme case is when there is a hard margin among the classes.  Indeed, in this case the misclassification error can decrease \emph{exponentially} fast as the number of points increases, 
while the surrogate loss error displays a polynomial decay. This behavior was first noted in \cite{Koltchinskii2005, Tsybakov2007} for a wide class of estimators (see also \cite{yao07}), 
and reprised more recently in \cite{vivien18, nitanda2019} for stochastic gradient descent. 
The effect of margin conditions has also been considered for \emph{multiclass} learning \cite{zhang04b, sun06, NIPS2012_1cecc7a7},
but not in the hard-margin case.
Interestingly, hard-margin and exponential rates have been studied by \cite{cabannes21}
in the context of structured prediction \cite{nowak19}.
However, these latter results are restricted to least-squares-based estimators.

The purpose of our paper is twofold. On the one hand, we analyze the effect of margin conditions, and in particular hard-margin conditions, for a wide class of multiclass estimators derived from different surrogate losses.
On the other hand, we build on ideas  in \cite{NIPS2012_1cecc7a7, vivien18, nitanda2019} to provide a simplified and self-contained treatment that naturally recovers results for binary classification as a special case. 
In particular, we note  that, in the presence of a hard margin,  the misclassification error curve does not exhibit any \emph{bias-variance} trade-off, thus providing a possible explanation to the empirical observations that motivate our study.

The rest of the paper is organized as follows.
We conclude the introduction by setting up some basic notation.
In \Cref{sec:setting} we describe the multiclass classification problem, the surrogate approach and the simplex encoding. In \Cref{sec:analysis} we analyze the bias-variance decomposition for the misclassification risk, discuss soft and hard-margin conditions, and prove our main results of exponential convergence under assumptions of hard margin. In \Cref{sec:experiments} we validate the theory with experiments on synthetic data. Some final remarks are provided in \Cref{sec:conclusions}.

\paragraph{Notation.}

We will be using the following general notation.
$ a \lesssim b $ means that $ a \le c b $ for some positive absolute constant $c$.
The Euclidean norm and inner product of vectors $ w,w' \in \R^p $ are denoted by $\|w\|$ and $ \langle w , w' \rangle $, respectively.
For an event $E$, $ \I \{E\} $ denotes its indicator function,
and $\P\{E\}$ its probability.
The expectation of a random variable $Z$ is denoted by $\E Z$;
when the expectation is taken only with respect to a random variable $X$
(but $Z$ possibly depends also on other variables), we write $ \E_X Z $.
Conditioning of events or random variables on an event $E$ is indicated by $ \cdot \mid E $.
$ L^0(\xx,\zz) $ is the space of measurable functions on the (probability) measure space $\xx$ and with values in $ \zz\subset\R^p $,
and $ L^\infty(\xx,\zz) $
the subspace of essentially bounded functions, with norm $ \| \cdot \|_\infty $.

\section{Setting}
\label{sec:setting}

We consider a standard multiclass learning problem.
Let $ (X,Y) \in \xx \times \yy $ be a random pair,
where $ \xx \subset \R^d $
and $ \yy $ is a finite set of $ T \ge 2 $ elements.
We call the elements of $\yy$ \emph{classes},
and a (measurable) function $ c : \xx \to \yy $ a \emph{classifier}.
The \emph{misclassification risk} of a classifier $c$ is
$$
 \rr(c) = \P \{ c(X) \ne Y \} .
$$
Let
$$
 \rho( y \mid x ) = \P \{ Y = y \mid X = x \}
$$
denote the conditional probability of the class $y$ given the observation $x$.
The risk $\rr$ is minimized by the \emph{Bayes rule}
$$
 c_* (x) = \argmax_{y\in\yy} \rho(y \mid x) .
$$
We denote the minimum risk by $ \rr_* = \rr(c_*) $.
Given $n$ independent copies $ (X_i,Y_i) $ of $ (X,Y) $, $ i = 1 , \dots , n $,
the goal is to learn a classifier $ \widehat{c}$
such that $ \rr(\widehat{c}) - \rr_* \to 0 $ in expectation as $ n \to \infty $.
More precisely, we are interested in finite-sample bounds of the form
$$
 \E \rr(\widehat{c}) - \rr_* \lesssim a_n ,
$$
where $ a_n \to 0 $ gives a rate of convergence.

Empirical risk minimization would prescribe to compute $\widehat{c}$
by minimizing a sample version of $\rr$.
The misclassification risk can be seen as the expectation of the \emph{0-1 loss}
$$
  \I \{ y \ne y' \} , \qquad y , y' \in \yy .
$$
The empirical mean would thus be $ \frac{1}{n} \sum_{i=1}^n \I \{ c(X_i) \ne Y_i \} $.
However, the 0-1 loss is neither smooth nor convex,
and optimizing it is in general an NP-hard combinatorial problem \cite{feldman2012}.
A viable strategy is to replace the 0-1 loss
with a convex \emph{surrogate},
and the space of classifiers with a suitable linear space of vector-valued functions.
To do this, it is necessary to choose a vector encoding of the classes $ \yy \hookrightarrow \R^p $
and a decoding operator $ D : \R^p \to \yy $.
Following \cite{NIPS2012_1cecc7a7}, we encode the $T$ classes as the vertices of a $(T-1)$-simplex embedded in $\R^{T-1}$ (see \Cref{fig:simplex}).
\begin{figure}[h]
	\centering
	\begin{tikzpicture}
		\begin{axis}[
			height=.25\linewidth,
			axis lines=middle,
			y axis line style={draw=none},
			xmin=-1.4,
			xmax=1.4,
			ymin=-.5,
			ymax=.5,
			ticks=none,]
			\node[circle,fill,inner sep=2pt] (p1) at (axis cs:1,0) {};
			\node[circle,fill,inner sep=2pt] (p2) at (axis cs:-1,0) {};
			\draw[black, line width=1.5pt, line cap=round] (p1) -- (p2);
		\end{axis}
	\end{tikzpicture}
	\hspace{30pt}
\begin{tikzpicture}
\begin{axis}[
	height=.25\linewidth,
	axis lines=middle,
	axis equal,
	xmin=-0.5,
	xmax=1.2,
	ymin=-1,
	ymax=1,
	ticks=none,]
	\node[circle,fill,inner sep=2pt] (p1) at (axis cs:1,0) {};
	\node[circle,fill,inner sep=2pt] (p2) at (axis cs:-0.5,0.8660254) {};
	\node[circle,fill,inner sep=2pt] (p3) at (axis cs:-0.5,-0.8660254) {};
	\addplot[patch, color=blue!25, fill opacity = 0.25, faceted color=black, line width=0.5pt] coordinates{(1,0) (-0.5,0.8660254) (-0.5,-0.8660254)};
\end{axis}
\end{tikzpicture}
\hspace{30pt}
	\begin{tikzpicture}
	\begin{axis}[height=.45\linewidth,
		axis lines=middle,
		axis equal,
		xmin=-1,
		xmax=1,
		ymin=-1,
		ymax=1,
		zmin=-1,
		zmax=1,
		ticks=none,
		view={135}{30}
		]
		
		\def\opacity{0.7}
		\def\fillcol{blue!25}
		\coordinate (c1) at ( 1.        ,  0.        ,  0.        );
		\coordinate (c2) at (-0.33333333,  0.94280904,  0.        );
		\coordinate (c3) at (-0.33333333, -0.47140452,  0.81649658);
		\coordinate (c4) at (-0.33333333, -0.47140452, -0.81649658);
		\addplot3[patch, color=\fillcol, fill opacity=\opacity, faceted color=black, line width=0.5pt] coordinates{
			( 1.        ,  0.        ,  0.        )
			(-0.33333333,  0.94280904,  0.        )
			(-0.33333333, -0.47140452,  0.81649658)
		};
		\addplot3[patch, color=\fillcol, fill opacity=\opacity, faceted color=black, line width=0.5pt] coordinates{
			(-0.33333333, -0.47140452, -0.81649658)
			( 1.        ,  0.        ,  0.        )
			(-0.33333333,  0.94280904,  0.        )
		};
		\addplot3[patch, color=\fillcol, fill opacity=\opacity, faceted color=black, line width=0.5pt] coordinates{
			(-0.33333333, -0.47140452,  0.81649658)
			(-0.33333333, -0.47140452, -0.81649658)
			( 1.        ,  0.        ,  0.        )
		};
		\addplot3[patch, color=\fillcol, fill opacity=\opacity, faceted color=black, line width=0.5pt] coordinates{
			(-0.33333333,  0.94280904,  0.        )
			(-0.33333333, -0.47140452,  0.81649658)
			(-0.33333333, -0.47140452, -0.81649658)
		};
		\node[circle,fill,inner sep=2pt] (p1) at (c1) {};
		\node[circle,fill,inner sep=2pt] (p2) at (c2) {};
		\node[circle,fill,inner sep=2pt] (p3) at (c3) {};
		\node[circle,fill,inner sep=2pt] (p4) at (c4) {};
	\end{axis}
\end{tikzpicture}
\caption{Simplex encoding for $ T = 2,3,4 $.}
\label{fig:simplex}
\end{figure}
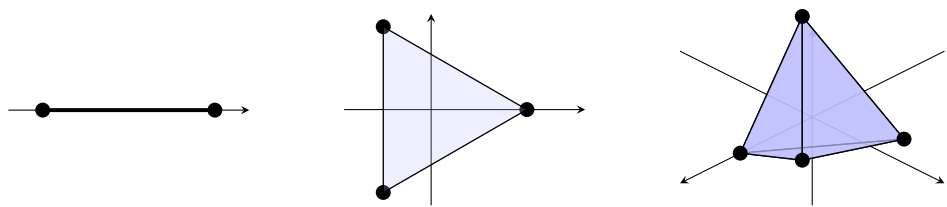%
For notational convenience we denote $\yy$ itself with its simplex encoding, 
that is $\yy$ is the set of points in 
in $\R^{T-1}$ such that
$$
 \|y\| = 1, \qquad \langle y , y' \rangle = -\tfrac{1}{T-1} .
$$
The decoding operator assigns a vector to the class with largest projection,
with ties arbitrarily broken (see \Cref{fig:decoder}):
$$
 D : \R^{T-1} \to \yy , \qquad D(w) = \argmax_{y\in\yy} \langle w, y \rangle .
$$
\begin{figure}[H]
\centering
\begin{tikzpicture}[scale=1.5,baseline]
	\tikzset{cross/.style={cross out, draw=black, minimum size=2*(#1-\pgflinewidth), inner sep=0pt, outer sep=0pt},
             cross/.default={3pt}}
    \tikzset{sline/.style={line width=1.3pt}}
    \tikzset{dline/.style={dashed, line width=0.5pt}}
    \tikzset{simplv/.style={circle, fill, inner sep=2pt}}
    \tikzset{arrow/.style={-{latex[length=1mm]}, line width=0.3pt}}
    
	\coordinate (c1) at (1, 0);
	\coordinate (c2) at (-1,0);
	\coordinate (o) (0, 0);
	\draw[sline] (o) -- (c1);
	\draw[sline] (o) -- (c2);
	\node[simplv,softred] (n1) at (c1) {};
	\node[simplv,limegreen] (n2) at (c2) {};
			\draw[name path=bis, dline] (0, -0.8660254) -- (0, 0.8660254);
	\coordinate (w1) at (0.3, 0);
	\node[above right=0pt of {w1}, outer sep=0pt] {$w$};
	\draw (w1) node[simplv,softred] {};
	\node[right=0pt of {c1}] {$D(w)$};
	\draw[arrow] (w1) -- (c1);
	\coordinate (w2) at (-0.6, 0);
	\draw (w2) node[simplv,limegreen] {};
	\draw[arrow] (w2) -- (c2);
\end{tikzpicture}
\hspace{50pt}
\begin{tikzpicture}[scale=1.5,baseline]
	\tikzset{cross/.style={cross out, draw=black, minimum size=2*(#1-\pgflinewidth), inner sep=0pt, outer sep=0pt},
             cross/.default={3pt}}
    \tikzset{sline/.style={line width=1.3pt}}
    \tikzset{dline/.style={dashed, line width=0.5pt}}
    \tikzset{simplv/.style={circle, fill, inner sep=2pt}}
    \tikzset{arrow/.style={-{latex[length=1mm]}, line width=0.3pt}}
    
	\coordinate (c1) at (1, 0);
	\coordinate (c2) at (-0.5,0.8660254);
	\coordinate (c3) at (-0.5,-0.8660254);
	\coordinate (o) (0, 0);
	\draw[sline] (o) -- (c1);
	\draw[sline] (o) -- (c2);
	\draw[sline] (o) -- (c3);
	\node[simplv,softred] (n1) at (c1) {};
	\node[simplv,limegreen] (n2) at (c2) {};
	\node[simplv,moderateblue] (n3) at (c3) {};
	\bisettrice[draw, dline]  {c1}{o}{c2}{b1}[2];
	\bisettrice[draw, dline]  {c2}{o}{c3}{b2}[2];
	\bisettrice[draw, dline]  {c3}{o}{c1}{b3}[2];
	\coordinate (w1) at (0.8, 0.6);
	\node[above right=0pt of {w1}, outer sep=0pt] {$w$};
	\draw (w1) node[simplv,softred] {};
	\node[right=0pt of {c1}] {$D(w)$};
	\draw[arrow] (w1) -- (c1);
	\coordinate (w2) at (-0.4, 0.35);
	\draw (w2) node[simplv,limegreen] {};
	\draw[arrow] (w2) -- (c2);
	\coordinate (w3) at (0.1, -0.45);
	\draw (w3) node[simplv,moderateblue] {};
	\draw[arrow] (w3) -- (c3);
\end{tikzpicture}
\caption{Simplex decoding ($ T = 2,3 $).}
\label{fig:decoder}
\end{figure}
In the case of binary classification ($T=2$),
we have $ \yy = \{ \pm1 \} \subset \R $ and $ D(w) = \sign(w) $.
A \emph{plug-in} classifier $ Df(x) = D(f(x)) $ can be defined
by composing a vector-valued function $ f : \xx \to \R^{T-1} $ with the decoding operator.
The simplex coding offers some advantages over other common types of coding, such as one-hot.
First, as we just saw, it is perfectly consistent with the standard $ (\{\pm 1\}, \sign) $ coding of binary classification.
Second, it automatically satisfies structural constraints
that other codings need to impose additionally on the hypothesis class;
as the so-called sum to zero constraint,
which makes both numerical implementation and theoretical analysis more involved.

To identify the \emph{target function} to plug into the decoder,
we fix a convex surrogate loss
$$
 \ell : \R^{T-1} \times \yy \to [0,\infty)
$$
with corresponding risk $ \rr_\ell (f)  =  \E \ell (f(X),Y) $, and define
$$
 f_\ell = \argmin_{f \in L^0(\xx,\R^{T-1}) }  \rr_\ell (f) .
$$
We then approximate $f_\ell$ by a (uniform) approximator $f_\la$.
At the current level of generality, $\la$ simply denotes a generic parameter to be tuned.
For instance, $f_\la$ can be the minimizer of a regularized risk,
with $\la$ the regularization parameter.
Finally, our classifier will be $ D\widehat{f}_\la $,
with $\widehat{f}_\la$ the empirical estimate of $f_\la$ based on the samples $ \{ (X_i,Y_i) \}_{i=1}^n $.

We are going to consider two cases of loss functions.
The first case is the square loss $ \ell(w,y) = \| w - y \|^2  $,
for which $ f_\ell (x) = \eta(x) $, where
$$
 \eta(x) = \E [ Y \mid X = x ]
$$
is the \emph{regression function}.
The second case is a family of functions of the \emph{margin} $ \langle w , y \rangle $,
namely losses of the form $ \ell_\phi(w,y) = \phi(\langle w , y \rangle) $ for a suitable (differentiable, convex) function $ \phi : \R \to [0,\infty) $.
Examples of $\phi$ are $ \phi(t) = \ln(2)^{-1} \ln ( 1 + e^{-t} ) $ and $ \phi (t) = e^{-t} $,
generalizing the logistic and exponential loss to the multiclass setting, respectively.
For margin losses, we will denote the minimizer $ f_\ell $ by $ f_\phi $.
Note that in binary classification the square loss is itself a function of the margin,
$ \ell(w,y) = \| 1 - wy \|^2 $,
while for $ T \ge 3 $ this is no longer the case.

\section{Analysis}
\label{sec:analysis}

We start by analyzing the peculiar structure of the bias-variance decomposition in classification.
As we will see, the key point is that the bias can be made zero
under suitable margin conditions.
When only the variance is left,
the misclassification error can be controlled by uniform concentration.
These general facts can then be applied to different loss functions,
leading to our main results.

\subsection{Bias-variance for plug-in classifiers} \label{sec:bias-variance}

To analyze the performance of a plug-in classifier $D\widehat{f}_\la$,
we decompose the excess misclassification risk as 
\begin{align}
 \rr( D\widehat{f}_\la ) - \rr_*  = \ & \rr( D\widehat{f}_\la ) - \rr( D f_\la ) \label{eq:variance} \\
 + \ & \rr( D f_\la ) - \rr( Df_\ell ) \label{eq:bias} \\
 + \ & \rr( Df_\ell ) - \rr_* . \label{eq:fisher}
\end{align}
The last term results from replacing the 0-1 loss with the surrogate loss $\ell$.
Loss functions for which $ Df_\ell = c_* $, and therefore \eqref{eq:fisher} is zero,
are called \emph{Fisher consistent} (or \emph{classification calibrated}).
Fisher consistency is a common and well characterized property~\cite{Zhang2004}.
In particular, the square loss is Fisher consistent (see \Cref{lem:margin}).
For margin losses, consistency will be assumed in all that follows,
and shown in some examples.

\begingroup
\setlength\intextsep{0pt}
\begin{wrapfigure}[8]{r}{.35\linewidth}
\centering
\begin{tikzpicture}[
	scale=1,
	dot/.style = {circle, fill, minimum size=#1,
		inner sep=0pt, outer sep=0pt},
	dot/.default = 4pt,  
	]
	\clip (-1,-1.75) rectangle (3.1, 1.75);
	\node[dot, label=right:$f_\ell$] (fl) at (2, -0.1) {};
	\node[dot, label=right:$f_\lambda$] (fla) at (-0.8, 0.5) {};
	\node[dot, label=above right:$f_{\lambda_*}$] (flas) at (1, 0.5) {};
	\node[dot, label=right:$\eta$] (eta) at (1, -1) {};
	\draw[-{Latex[length=1.5mm]},] (fla) to[out=45,in=135] (flas);
	\draw[-{Latex[length=1.5mm]},] (flas) to[out=-40,in=160] (fl);
	
	\draw (0,0) .. controls (1, 3) and (3.2, 1) .. (3, 0) .. controls (2.66, -2.5) and (-0.5, -2) .. (0, 0) -- cycle;
\end{tikzpicture}
\end{wrapfigure}
The term \eqref{eq:bias} is a bias term.
Crucially, it can be set to zero
for a wide range of parameters $\la$.
The idea is that we can have $ \rr(Df_\la) = \rr(Df_\ell) $
even when $ \rr_\ell(f_\la) \gg \rr_\ell(f_\ell) $.
Here is a fundamental difference between regression and classification.
While in regression $f_\ell$ is a target point,
in classification it is rather a representative of the target class
$$
 [ f_\ell ]  = \{ f \in L^0(\xx,\R^{T-1}) : Df = Df_\ell \text{ almost surely} \} .
$$
Hence, it is enough for $f_\la$ to land in $[ f_\ell ]$,
possibly far from $f_\ell$ itself.
This is easier if the class $[ f_\ell ]$ is ``large'',
which can be ensured by imposing special margin conditions.
Assuming that $\ell$ is Fisher consistent, a generic function $f$ lies in $[f_\ell]$ if and only if
\begin{equation} \label{eq:Df=c*}
 Df = c_* \quad \text{almost surely} .
\end{equation}
Chosen a Fisher consistent loss and put the bias to zero,
all that’s left is the variance term \eqref{eq:variance}:
\begin{equation} \label{eq:misc}
 \rr( D\widehat{f}_\la ) - \rr_*  = \rr( D\widehat{f}_\la ) - \rr( D f_\la ) .
\end{equation}
At this point, $\la$ is set and needs no trade-off.
Fast convergence of the variance, and therefore of the whole excess misclassification risk,
can be derived using once again margin conditions.
\endgroup

\subsection{Margin conditions} \label{sec:margin}

In binary classification,
the \emph{margin conditions},
also known as Tsybakov's low-noise assumptions
\cite{Tsybakov1999,Tsybakov2004,Koltchinskii2005,Tsybakov2007},
are a set of assumptions under which it is possible to obtain
fast convergence (up to exponential) for plug-in classifiers.
They can be stated as follows:
there exists $ \alpha \in (0,\infty] $ such that, for every $ \de > 0 $,
\begin{equation} \label{eq:margin2}
 \P \{ | \eta(X) | \le \de \} \lesssim \de^\al .
\end{equation}
In the extreme case of $ \al = \infty $, we get
\begin{equation} \label{eq:hard-margin2}
 |\eta(X)| \ge \de \quad \text{almost surely} ,
\end{equation}
which is sometimes referred to as the \emph{hard-margin} condition.

Following~\cite{NIPS2012_1cecc7a7,nowak19}, we can generalize \eqref{eq:margin2} and \eqref{eq:hard-margin2} to the multiclass setting. 
For $ w \in \R^{T-1} $, we define the \emph{decision margin}
$$
 M(w) = \min_{ y \ne D(w) } \langle w , D(w) - y \rangle .
$$
$M(w)$ is the difference between the largest and the second largest projection of $w$ onto $\yy$,
namely the confidence gap between first and second guess.
For $ T = 2 $, we have $ M(w) = 2 |w| $.
In general, we say that a function $ f : \xx \to \R^{T-1} $ satisfies the margin condition with exponent $ \al \in (0,\infty] $ if, for every $ \de > 0 $,
\begin{equation} \label{eq:marginf}
 \P \{ M(f(X)) \le \de \} \lesssim \de^\al .
\end{equation}
In particular, one can take $ f = \eta $, which for $T=2$ gives back \eqref{eq:margin2}.
Again, $ \al = \infty $ gives the hard-margin condition (see \Cref{fig:hard-margin})
\begin{equation} \label{eq:hard-marginf}
 M(f(X)) \ge \de \quad \text{almost surely} .
\end{equation}
\begin{figure}[h]
\centering
\begin{tikzpicture}[scale=1.8,baseline]
	\tikzset{cross/.style={cross out, draw=black, minimum size=2*(#1-\pgflinewidth), inner sep=0pt, outer sep=0pt},
		cross/.default={3pt}}
	\tikzset{sline/.style={line width=1pt}}
	\tikzset{dline/.style={dashed, line width=0.5pt}}
	\tikzset{simplv/.style={circle, fill, inner sep=1.6pt}}
	\tikzset{extended/.style={shorten >= -3cm, shorten <= 0}}
	
	\def\inflate{0.3}
	\def\height{0.1}
	\begin{scope}
		\clip(-1.0,-\height) rectangle (1.0, \height);
		
		\coordinate (c1) at (0.7, 0);
		\coordinate (c2) at (-0.7, 0);
		\coordinate (o) (0, 0);
		\fill [limegreen!75] (-1.3,-\height) rectangle ($(0,\height) - (0.3, 0)$);
		\fill [softred!75] ($(0,-\height) + (0.3, 0)$) rectangle (1.3, \height);
		\node[simplv] (n1) at (c1) {};
		\node[simplv] (n2) at (c2) {};
		\draw[name path=bis, dline] (0, -\height) -- (0, \height);
		\draw[] ($(0, -\height) - (\inflate, 0)$) -- ($(0, \height) - (\inflate, 0)$);
		\draw[] ($(0, -\height) + (\inflate, 0)$) -- ($(0, \height) + (\inflate, 0)$);
		
		\draw[{LaTeX[width=1.5mm, length=0.9mm]}-{LaTeX[width=1.5mm, length=0.9mm]}] ($(0,-0.0) - (\inflate,0)$) -- ($(0, -0.0) + (\inflate,0)$) node[midway, fill=white] {$\delta$};
	\end{scope}
\end{tikzpicture}\hspace{50pt}
\begin{tikzpicture}[scale=1.2,baseline]
	\tikzset{cross/.style={cross out, draw=black, minimum size=2*(#1-\pgflinewidth), inner sep=0pt, outer sep=0pt},
		cross/.default={3pt}}
	\tikzset{sline/.style={line width=1pt}}
	\tikzset{dline/.style={dashed, line width=0.5pt}}
	\tikzset{simplv/.style={circle, fill, inner sep=1.6pt}}
	\tikzset{extended/.style={shorten >= -3cm, shorten <= 0}}
	
	\def\inflate{0.4}
	\def\figxl{-1.2}
	\def\figyl{-1.2}
	\def\figxr{1.5}
	\def\figyr{1.2}
	\begin{scope}
		\clip(\figxl,\figyl) rectangle (\figxr, \figyr);
		
		\coordinate (c1) at (1, 0);
		\coordinate (c2) at (-0.5,0.8660254);
		\coordinate (c3) at (-0.5,-0.8660254);
		\coordinate (o) (0, 0);
		\node[simplv] (n1) at (c1) {};
		\node[simplv] (n2) at (c2) {};
		\node[simplv] (n3) at (c3) {};
		\bisettrice[draw, dline]  {c1}{o}{c2}{b1}[5];
		\bisettrice[draw, dline]  {c2}{o}{c3}{b2}[5];
		\bisettrice[draw, dline]  {c3}{o}{c1}{b3}[5];
		\coordinate (b1d) at ($5*(b1)$);
		\coordinate (b2d) at ($5*(b2)$);
		\coordinate (b3d) at ($5*(b3)$);
		
		\coordinate (infl1) at ($(o)!\inflate!(c1)$);	
		\draw[extended, name path=infl1a] (infl1) -- +($(b1d)-(o)$);
		\draw[extended, name path=infl1b] (infl1) -- +($(b3d)-(o)$);
		\coordinate (infl2) at ($(o)!\inflate!(c2)$);
		\draw[extended, name path=infl2a] (infl2) -- +($(b1d)-(o)$);
		\draw[name path=infl2b] (infl2) -- +($(b2d)-(o)$);
		\coordinate (infl3) at ($(o)!\inflate!(c3)$);
		\draw[extended, name path=infl3a] (infl3) -- +($(b2d)-(o)$);
		\draw[extended, name path=infl3b] (infl3) -- +($(b3d)-(o)$);
		
		\path[name path=horiz] (-1, 0.95) -- (2, 0.95);
		\path[name intersections={of=infl2a and horiz, by=t-start}];
		\path[name intersections={of=infl1a and horiz, by=t-end-par}];
		\coordinate (t-end-par) at ($(infl1)!(t-start)!(t-end-par)$);
		\draw[{LaTeX[width=1.5mm, length=0.9mm]}-{LaTeX[width=1.5mm, length=0.9mm]}] (t-start)--(t-end-par) node[midway, fill=white] {$\delta$};
		
		\path[name path=clip-path] (\figxl, \figyl) -- (\figxl, \figyr) -- 
								   (\figxr, \figyr) -- (\figxr, \figyl) -- (\figxl, \figyl);
		\begin{pgfonlayer}{bg}
			\clip(\figxl,\figyl) rectangle (\figxr, \figyr);
			\path[
			intersection segments={
				of=infl2a and infl2b,
				sequence={L2--R2}
			},name path=AB];
			\path[
			fill=limegreen!75,
			intersection segments={
				of=AB and clip-path,
				sequence={L2--R2}
			}];
			\path[
				intersection segments={
				of=infl1a and infl1b,
				sequence={L2--R2}
			},name path=AB];
			\path[
				fill=softred!75,
				intersection segments={
				of=AB and clip-path,
				sequence={L2--R2}
			}];
			\path[
				intersection segments={
				of=infl3a and infl3b,
				sequence={L*--R*}
			},name path=AB];
			\path[
				fill=moderateblue!75,
				intersection segments={
				of=clip-path and AB,
				sequence={L3--L1--R*}
			}];
		\end{pgfonlayer}
	\end{scope}
\end{tikzpicture}

\caption{Hard-margin condition ($T=2,3$).}
\label{fig:hard-margin}
\end{figure}
Intuitively, these conditions say that the probability of falling in a ``runoff zone'',
where the plugging-in function would be ``uncertain'',
is either (polynomially) small \eqref{eq:marginf}, or zero \eqref{eq:hard-marginf}.
The reason why we state \eqref{eq:marginf} and \eqref{eq:hard-marginf} for an arbitrary $f$ is that
we will transfer these properties to minimizers $ f_\ell $ (and $f_\la$) of general (regularized) losses,
including but not limited to the square.
Combining Fisher consistency and hard margin, we obtain the following stronger condition.
\begin{lem} \label{lem:D-margin}
A function $ f \in L^0(\xx,\R^{T-1}) $ satisfies \eqref{eq:Df=c*} and \eqref{eq:hard-marginf}
if and only if
\begin{equation} \label{eq:D-margin}
 \min_{y \ne c_*(X)} \langle f(X) , c_*(X) - y \rangle \ge \de \quad \text{\textnormal{almost surely}} .
\end{equation}
\end{lem}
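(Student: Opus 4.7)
The plan is to prove both directions by essentially unfolding the definition of $D$ and $M$, using the fact that $D(w) = \argmax_{y \in \yy} \langle w, y \rangle$.

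For the forward direction, I would assume $Df = c_*$ almost surely and $M(f(X)) \ge \delta$ almost surely. Substituting $D(f(X)) = c_*(X)$ directly into the definition of $M(f(X)) = \min_{y \ne D(f(X))} \langle f(X), D(f(X)) - y \rangle$ immediately yields \eqref{eq:D-margin} on the full-measure set where both assumptions hold simultaneously.

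For the reverse direction, assume \eqref{eq:D-margin}. The key observation is that \eqref{eq:D-margin} implies, for every $y \ne c_*(X)$, that $\langle f(X), c_*(X)\rangle \ge \langle f(X), y\rangle + \delta > \langle f(X), y \rangle$ (since $\delta > 0$; strict positivity of $\delta$ is implicit from \eqref{eq:hard-marginf}, but I would note this explicitly). Hence $c_*(X)$ is the strict maximizer of $y \mapsto \langle f(X), y\rangle$ over $\yy$, so $D(f(X)) = c_*(X)$ almost surely, giving \eqref{eq:Df=c*}. With this identification, \eqref{eq:D-margin} becomes exactly $M(f(X)) \ge \delta$ almost surely, which is \eqref{eq:hard-marginf}.

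There is no real obstacle here: the statement is essentially a rewriting of the definition of $M$ once the decoder is pinned down to $c_*$. The only thing to be slightly careful about is handling the ``almost surely'' quantifiers in both directions (intersecting the two full-measure sets in the forward direction) and confirming that the strict inequality from $\delta > 0$ is what allows us to conclude $D(f(X)) = c_*(X)$ rather than merely that $c_*(X)$ attains the maximum.
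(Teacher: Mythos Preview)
Your proposal is correct and follows essentially the same approach as the paper's own proof: both directions amount to observing that under $Df=c_*$ the expression in \eqref{eq:D-margin} is literally $M(f(X))$, and for the reverse direction the strict inequality coming from $\de>0$ forces $c_*(X)$ to be the unique argmax, hence $Df(X)=c_*(X)$. Your extra remarks about intersecting full-measure sets and the role of $\de>0$ are fine but not needed beyond what the paper already does.
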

\begin{proof}
First note that, if \eqref{eq:Df=c*} holds, then \eqref{eq:D-margin} is the same as \eqref{eq:hard-marginf}.
Now suppose \eqref{eq:D-margin} holds. Then
$$
 \langle f(X) , c_*(X) \rangle > \max_{y \ne c_*(X)} \langle f(X) , y \rangle ,
 $$
hence $ Df(X) = c_*(X) $, that is, \eqref{eq:Df=c*} holds too.
\end{proof}

Beside \eqref{eq:marginf} and \eqref{eq:hard-marginf}, we will also consider another generalization of \eqref{eq:hard-margin2}
which is independent of any particular classifier,
and instead is stated purely in terms of the conditional probabilities.
To illustrate such a condition, we note that \eqref{eq:hard-margin2} is equivalent to saying that
either $ \rho( 1 | x ) $ or $ \rho(-1 | x) $ is no less than $1/2 + \de/2$
(for almost every $x$, there is one class with probability bounded away from coin flipping).
This in turn is equivalent to
\begin{equation} \label{eq:hard-margin}
 \min_{y \ne c_*(X)} \rho( c_*(X) \mid X )  - \rho( y \mid X ) \ge \de \quad \text{almost surely} ,
\end{equation}
which says that the most probable class has almost always an edge of $\de$
over the second most probable class.
Since this inequality makes sense for arbitrary $T$,
we take it as our hard-margin condition for multiclass problems.
More generally, one may consider problems where for some $\al \in (0,\infty]$
and all $\de>0$,
\begin{equation} \label{eq:margin}
 \P \{ \min_{y \ne c_*(X)} \rho( c_*(X) \mid X )  - \rho( y \mid X ) \le \de \} \lesssim \de^\al ,
\end{equation}
generalizing \eqref{eq:margin2} to $ T \ge 2 $.

The margin conditions on the conditional probabilities
can be related to those expressed on classifiers.
\begin{lem} \label{lem:margin}
 We have $ D\eta = c_* $ almost surely.
 Moreover, \eqref{eq:hard-margin} holds
 if and only if
 $\eta$ satisfies \eqref{eq:hard-marginf}.
\end{lem}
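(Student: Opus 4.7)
The plan is to use the simplex identity $\langle y, y'\rangle = -1/(T-1)$ for $y\ne y'$ (together with $\|y\|=1$) to express $\langle \eta(x), y\rangle$ directly in terms of $\rho(y\mid x)$. Expanding $\eta(x) = \sum_{y'\in\yy}\rho(y'\mid x)\, y'$ and collecting the diagonal and off-diagonal contributions yields the key identity
$$
\langle \eta(x), y\rangle \;=\; \rho(y\mid x) - \tfrac{1}{T-1}\sum_{y'\ne y}\rho(y'\mid x) \;=\; \tfrac{T}{T-1}\rho(y\mid x) - \tfrac{1}{T-1},
$$
which is an affine and strictly increasing function of $\rho(y\mid x)$.

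From this identity the first assertion $D\eta = c_*$ is immediate: the argmax over $y\in\yy$ of $\langle \eta(x), y\rangle$ coincides with the argmax of $\rho(\cdot\mid x)$. For the second assertion, taking differences of the identity above at two classes gives
$$
\langle \eta(x), c_*(x) - y\rangle \;=\; \tfrac{T}{T-1}\bigl(\rho(c_*(x)\mid x) - \rho(y\mid x)\bigr),
$$
and hence
$$
M(\eta(X)) \;=\; \tfrac{T}{T-1}\min_{y\ne c_*(X)}\bigl(\rho(c_*(X)\mid X) - \rho(y\mid X)\bigr) \quad\text{a.s.}
$$
The equivalence of \eqref{eq:hard-margin} with $\eta$ satisfying \eqref{eq:hard-marginf} then reduces to absorbing the positive constant $T/(T-1)$ into the margin parameter $\delta$.

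The only delicate point is the tie-breaking convention in the definitions of $D$ and $c_*$: since $y\mapsto \langle \eta(x), y\rangle$ is strictly increasing in $\rho(y\mid x)$, ties on one side correspond bijectively to ties on the other, so the arbitrary tie-breakers can be chosen to agree and the equality $D\eta = c_*$ holds pointwise (and a fortiori almost surely). Beyond this bookkeeping, no concentration, approximation, or additional analytic input is needed; the whole lemma reduces to the elementary simplex algebra performed above.
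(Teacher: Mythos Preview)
Your proof is correct and follows essentially the same approach as the paper: both rest on the observation that $\eta(x)=\sum_{y'\in\yy}\rho(y'\mid x)\,y'$ and that the simplex geometry makes $\langle\eta(x),y\rangle$ an increasing (affine) function of $\rho(y\mid x)$. The paper phrases this more abstractly via the barycenter map $\beta:\Delta\to\co\yy$ and simply asserts the order-preservation, whereas you compute the explicit identity $\langle\eta(x),y\rangle=\tfrac{T}{T-1}\rho(y\mid x)-\tfrac{1}{T-1}$, which makes the constant $\tfrac{T}{T-1}$ relating the two margins visible; otherwise the arguments are the same.
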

\begin{proof}
Let
$$
 \Delta = \{ p \in \R^{\yy} : p_y \ge 0 , \sum_{y\in\yy} p_y = 1 \}
$$
be the probability simplex on $\yy$,
and let $ \co \yy $ be the encoding simplex defined as the convex hull of $\yy$.
Then $\Delta$ and $\co\yy$ are canonically isomorphic via the barycenter coordinate map
$$
 \beta: \Delta \to \co\yy , \qquad \beta(p) = \sum_{y\in\yy} p_y y .
$$
Now consider the map
$$
 \rho : \xx \to \Delta , \qquad \rho(x) = [\rho(y \mid x)]_{y\in\yy} .
$$
Then we have $ \beta \circ \rho = \eta $.
It follows that $ y \in \yy $ maximizes $ \rho(y \mid x) $ if and only if it maximizes $ \langle \eta(x) , y \rangle  $.
Therefore, $ D\eta=c_* $.
The same holds for maximizing over $ y \ne c_*(x) $,
whence the second claim.
\end{proof}

\subsection{Misclassification comparison} \label{sec:comparison}

In view of \eqref{eq:misc},
we need in fact to compare the misclassification risk of two classifiers.
This can be done by introducing a bounding distance.
Since the distance will be symmetric,
the resulting bound gives a symmetric comparison between any two classifiers,
as opposed to the usual comparison of a classifier with respect to a fixed (Bayes) rule.
For this reason, the following results may be of independent interest.

We define the \emph{Hamming distance} of $ c' , c \in L^0(\xx,\yy) $ as
$$
 r( c' , c ) = \P \{ c'(X) \ne c(X) \} .
$$
The Hamming distance bounds the difference of misclassification risk.
\begin{lem} \label{lem:R<r}
For every $ c' , c \in L^0(\xx,\yy) $,
$$
  | \rr(c') - \rr(c) | \le r(c',c) .
$$
\end{lem}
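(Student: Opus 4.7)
The plan is to write each misclassification risk as the expectation of a 0--1 indicator and then exploit the fact that the two indicators coincide on the event $\{c'(X) = c(X)\}$. Concretely, I would start from
$$
 \rr(c') - \rr(c) = \E \bigl[ \I\{c'(X) \ne Y\} - \I\{c(X) \ne Y\} \bigr] ,
$$
and split the expectation according to whether $c'(X) = c(X)$ or $c'(X) \ne c(X)$.

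On the event $\{c'(X) = c(X)\}$ the integrand vanishes identically, so only the event $\{c'(X) \ne c(X)\}$ contributes. On that event, each of the two indicators lies in $\{0,1\}$, so their difference is bounded in absolute value by $1$. Therefore, applying Jensen's inequality (or simply the triangle inequality for expectations),
$$
 |\rr(c') - \rr(c)|
 \le \E \bigl[ \bigl| \I\{c'(X) \ne Y\} - \I\{c(X) \ne Y\} \bigr| \cdot \I\{c'(X) \ne c(X)\} \bigr]
 \le \E \bigl[ \I\{c'(X) \ne c(X)\} \bigr] = r(c',c) .
$$

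No step is technically delicate here; the only thing to be careful about is the indicator algebra ensuring that the contribution from $\{c'(X) = c(X)\}$ is really zero, for which it suffices to note that in that case $c'(X) \ne Y$ and $c(X) \ne Y$ are the same event. This yields the desired bound.
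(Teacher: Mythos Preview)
Your proof is correct and follows essentially the same route as the paper: both start from $\rr(c')-\rr(c)=\E[\I\{c'(X)\ne Y\}-\I\{c(X)\ne Y\}]$, pass the absolute value inside the expectation, and use the pointwise bound $|\I\{c'(X)\ne Y\}-\I\{c(X)\ne Y\}|\le \I\{c'(X)\ne c(X)\}$. Your case split on $\{c'(X)=c(X)\}$ is simply an explicit justification of this last inequality, which the paper states in one line.
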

\begin{proof}
By direct computation,
 \begin{align*}
  | \rr (c') - \rr(c) |
  & = | \E [ \I \{ c'(X) \ne Y \} - \I \{ c(X) \ne Y \} ] | \\
  & \le \E [ | \I \{ c'(X) \ne Y \} - \I \{ c(X) \ne Y \} | ] \\
  & \le \E [ \I \{ c'(X) \ne c(X) \} ] = r ( c' , c ) . \qedhere
 \end{align*}
\end{proof}
The next step is to bound the Hamming distance between two plug-in classifiers.
\begin{lem} \label{lem:r<infty}
 For every $ f', f \in L^\infty(\xx,\R^{T-1}) $,
 $$
  r(Df',Df) \le \P \left\{ \| f' - f \|_\infty \ge \sqrt{\tfrac{T-1}{2T}} M(f(X)) \right\} .
 $$
\end{lem}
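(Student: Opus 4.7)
The plan is to prove the inequality pointwise: I will show that at (almost) every $x$, if $Df'(x) \ne Df(x)$, then $\|f'(x) - f(x)\| \ge \sqrt{(T-1)/(2T)} \, M(f(x))$. Since $\|f'(x) - f(x)\| \le \|f' - f\|_\infty$ almost surely, this pointwise statement gives the inclusion of events $\{Df'(X) \ne Df(X)\} \subseteq \{ \|f'-f\|_\infty \ge \sqrt{(T-1)/(2T)}\, M(f(X))\}$, from which the claim follows by monotonicity of $\P$.

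For the pointwise step, fix $x$ and write $w = f(x)$, $w' = f'(x)$, $y = Dw$, $y' = Dw'$, and assume $y \ne y'$. By definition of the decoder applied to $w'$, we have $\langle w', y' \rangle \ge \langle w', y \rangle$, equivalently $\langle w', y - y' \rangle \le 0$. By definition of the decision margin at $w$, we have $\langle w, y - y' \rangle \ge M(w)$, since $y' \ne y = Dw$. Subtracting, $\langle w - w', y - y' \rangle \ge M(w)$, and Cauchy--Schwarz gives $\|w - w'\| \cdot \|y - y'\| \ge M(w)$.

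All that remains is to compute $\|y - y'\|$. Using the simplex encoding properties $\|y\| = \|y'\| = 1$ and $\langle y, y' \rangle = -1/(T-1)$, I get
\[
 \|y - y'\|^2 = 2 + \tfrac{2}{T-1} = \tfrac{2T}{T-1},
\]
so $\|w - w'\| \ge M(w) \sqrt{(T-1)/(2T)}$, exactly the desired bound. This shows the pointwise implication and completes the proof upon integrating over $X$.

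There is no real obstacle here; the only mildly delicate point is the geometric constant, which comes down to the norm of the difference of two distinct simplex vertices. The rest is a one-line Cauchy--Schwarz argument combined with the defining inequalities of $D$ and $M$.
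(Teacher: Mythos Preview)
Your proof is correct and follows essentially the same approach as the paper: fix $x$ with $Df'(x)\ne Df(x)$, use the defining inequalities of $D$ and $M$ to bound $M(f(x))$ by $\langle f(x)-f'(x),\, y-y'\rangle$, apply Cauchy--Schwarz, and compute $\|y-y'\|=\sqrt{2T/(T-1)}$ from the simplex geometry. Your write-up is in fact slightly cleaner than the paper's, which runs the same chain of inequalities but with the roles of $f$ and $f'$ (and $y$ and $y'$) swapped, so that the displayed chain literally bounds $M(f'(x))$ rather than $M(f(x))$; one then needs the obvious symmetry to conclude.
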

\begin{proof}
Let $ Df'(x) = y' \ne y = Df(x) $.
Then
\begin{align*}
 \min_{j \ne y'} \langle y' - j , f'(x) \rangle
 & = \langle y' , f'(x) \rangle - \max_{j \ne y'} \langle j , f'(x) \rangle \\
 & \le \langle y' , f'(x) \rangle - \langle y , f'(x) \rangle \\
 & \le \langle y' - y , f'(x) \rangle - \langle y' - y , f(x) \rangle \\
 & = \langle y' - y , f'(x) - f(x) \rangle \\
 & \le \| y' - y \| \| f'(x) - f(x) \| \\
 & \le \sqrt{\tfrac{2T}{T-1}} \| f' - f \|_\infty . \qedhere
\end{align*}
\end{proof}
Now we let the samples come into play.
Let $ \widehat{f} \in L^\infty(\xx,\R^{T-1}) $ be a function of $(X_i,Y_i)$, $ i = 1,\dots,n $,
such that, for every $ \eps > 0 $ and some constant $ b > 0 $,
\begin{equation} \label{eq:inf<exp}
 \P \{ \| \h{f} - f \|_\infty > \eps \} \lesssim \exp(-n \eps^2 / b^2) .
\end{equation}
Then the following polynomial and exponential bounds hold true.
\begin{prop} \label{prop:poly-exp}
Suppose $f$ satisfies the margin condition \eqref{eq:marginf},
and let $\widehat{f}$ obey the concentration \eqref{eq:inf<exp}.
Then
 $$
  \E | \rr(D\widehat{f}) - \rr(Df) | \lesssim b^\al \left(\tfrac{2T}{T-1}\right)^{\al/2} \left(\tfrac{\log n^{\al/2}}{n}\right)^{\al/2} .
 $$
 If $f$ satisfies the hard-margin condition \eqref{eq:hard-marginf}, then
 $$
  \E | \rr(D\widehat{f}) - \rr(Df) | \lesssim \exp(-n \de^2 / b^2) .
 $$
\end{prop}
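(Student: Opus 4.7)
My plan is to chain together the three preceding lemmas. Lemma \ref{lem:R<r} reduces the misclassification difference to the Hamming distance: $|\rr(D\widehat{f})-\rr(Df)|\le r(D\widehat{f},Df)$. Since $r(D\widehat{f},Df)=\P_X\{D\widehat{f}(X)\ne Df(X)\mid\widehat{f}\}$, taking expectation over the training sample, applying Lemma \ref{lem:r<infty} conditionally on $\widehat{f}$, and invoking Fubini give
$$
\E\,r(D\widehat{f},Df)\ \le\ \P\bigl\{\|\widehat{f}-f\|_\infty\ge\tau\,M(f(X))\bigr\},\qquad \tau:=\sqrt{\tfrac{T-1}{2T}},
$$
where the right-hand probability is taken jointly over the sample and an independent test point $X$. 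This is the master inequality that drives both statements.

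For the hard-margin statement, condition \eqref{eq:hard-marginf} forces $M(f(X))\ge\delta$ almost surely, so the master inequality collapses to $\P\{\|\widehat{f}-f\|_\infty\ge\tau\delta\}$, which by \eqref{eq:inf<exp} is $\lesssim\exp(-n\tau^2\delta^2/b^2)$. The absolute constant $\tau^2\in(1/4,1/2)$ is absorbed into $\lesssim$ (or, equivalently, into $b$), yielding the announced bound.

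For the polynomial case, I introduce an auxiliary $\eta>0$ and peel the event via
$$
\{\|\widehat{f}-f\|_\infty\ge\tau M(f(X))\}\ \subset\ \{\|\widehat{f}-f\|_\infty\ge\tau\eta\}\cup\{M(f(X))\le\eta\},
$$
since on the second event's complement, $M(f(X))>\eta$ forces $\|\widehat{f}-f\|_\infty\ge\tau\eta$ for the left-hand event to occur. Because $\widehat{f}$ depends only on the training sample and $M(f(X))$ only on the independent $X$, the two summands on the right are governed separately by \eqref{eq:inf<exp} and \eqref{eq:marginf}, giving
$$
\E\,r(D\widehat{f},Df)\ \lesssim\ \exp\!\bigl(-n\tau^2\eta^2/b^2\bigr)+\eta^\alpha.
$$
I then balance the two terms by choosing $\eta\asymp(b/\tau)\sqrt{\alpha\log n/n}$, so that the exponential term is of order $n^{-\alpha/2}$ and therefore dominated by $\eta^\alpha$. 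Both terms are then of the order $(b/\tau)^\alpha\bigl(\alpha\log n/n\bigr)^{\alpha/2}$, which matches the statement after substituting $\tau^{-\alpha}=(2T/(T-1))^{\alpha/2}$ and interpreting $\log n^{\alpha/2}=(\alpha/2)\log n$.

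The only genuine obstacle is the peeling step in the polynomial case together with the balancing optimization; the independence of $\widehat{f}$ and $X$ must be used carefully to split the joint probability into the concentration and margin probabilities separately. Everything else is a mechanical application of Lemmas \ref{lem:R<r} and \ref{lem:r<infty} and the hypothesis \eqref{eq:inf<exp}; in particular, the hard-margin conclusion is essentially immediate once the master inequality is in place.
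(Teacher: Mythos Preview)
Your proposal is correct and follows essentially the same route as the paper: combine \Cref{lem:R<r} and \Cref{lem:r<infty} to reach the ``master inequality'', then split on the event $\{M(f(X))\le\eta\}$ versus its complement, apply \eqref{eq:marginf} and \eqref{eq:inf<exp} to the two pieces, and balance with $\eta\asymp(b/\tau)\sqrt{\alpha\log n/n}$. The paper phrases the split as conditioning on $E=\{M(f(X))\le\de\}$ rather than as a set inclusion, and writes the balancing choice as $\de^2=b^2\tfrac{2T}{T-1}\log(n^{\al/2})/n$, but these are the same computation; your treatment of the hard-margin case and the absorption of $\tau^2$ into $\lesssim$ also match the paper exactly.
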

\begin{proof}
By \Cref{lem:R<r} and \Cref{lem:r<infty},
\begin{align*}
 & \E | \rr(D\widehat{f}) - \rr(Df) | \le \E r ( D \h{f} , D f ) \le \E_{{\{(X_i,Y_i)\}}_{i=1}^n} \E_X \I \left\{ \| \h{f} - f \|_\infty \ge \sqrt{\tfrac{T-1}{2T}} M(f(X)) \right\} .
\end{align*}
Let $ \ga = \sqrt{\tfrac{T-1}{2T}} M(f(X)) $ and $ E = \{ M(f(X)) \le \de \} $.
Then we have
\begin{align*}
\E_X \I \{ \| \h{f} - f \|_\infty \ge \ga \} & = \E_X [ \I \{ \| \h{f} - f \|_\infty \ge \ga \} \mid E \ ] \ \P\{E\} \\
 & + \E_X [ \I \{ \| \h{f} - f \|_\infty \ge \ga \} \mid E^\complement \ ] \ \P\{E^\complement\} \\
 & \le \P\{E\} + \I \{ \| \h{f} - f \|_\infty \ge \sqrt{\tfrac{T-1}{2T}} \de \} ,
\end{align*}
where $ \P\{E\} \lesssim \de^\al $ by \eqref{eq:marginf}.
Moreover, thanks to \eqref{eq:inf<exp},
\begin{align*}
\E_{{\{(X_i,Y_i)\}}_{i=1}^n} \I \{ \| \h{f} - f \|_\infty \ge \sqrt{\tfrac{T-1}{2T}} \de \}
 = \P \left\{ \| \h{f} - f \|_\infty \ge \sqrt{\tfrac{T-1}{2T}} \de \right\}
 \lesssim \exp(-n \tfrac{T-1}{2T} \de^2 / b^2) .
\end{align*}
Setting $ \de^2 = b^2 \tfrac{2T}{T-1} (\log (n^{\al/2})/n) $, we obtain the first claimed inequality.
The second inequality follows similarly using \eqref{eq:hard-marginf} in place of \eqref{eq:marginf}.
\end{proof}

\subsection{Main results}

In this section we establish exponential convergence of plug-in classifiers
under assumptions of hard margin.
We assume the setting of \Cref{sec:setting},
and use the arguments of \Cref{sec:bias-variance,sec:margin,sec:comparison}.
The main results are given for two cases of loss functions,
first for the square loss (namely, for the regression function),
and then for a general family of margin losses.
We will also be making the additional assumptions below.

\begin{enumerate}[label=\textnormal{(\roman*)}]
 \item $ f_\ell \in L^\infty(\xx,\R^{T-1}) $ and $ \| f_\la - f_\ell \|_\infty \xrightarrow[\la]{} 0 $ \label{it:fla-fl} .
\end{enumerate}
Further, let $ \h{f}_\la \in L^\infty(\xx,\R^{T-1}) $ be an estimate of $f_\la$,
and assume that, for every $\la$ and some $ b > 0 $, the following concentration bound holds true:
\begin{enumerate}[resume,label=\textnormal{(\roman*)}]
 \item $ \P \{ \| \h{f}_\la - f_\la \|_\infty > \eps \} \lesssim \exp(-n \eps^2 / b^2) $. \label{it:hfla-fla}
\end{enumerate}
Regularization methods in reproducing kernel Hilbert spaces (RKHS) \cite{scholkopf2002learning,steinwart2008support}
provide one framework where the properties \ref{it:fla-fl}, \ref{it:hfla-fla} can be satisfied.
In particular, one can fix a separable RKHS $ \hh \subset L^0(\xx,\R^{T-1}) $ with norm $ \| \cdot \|_\hh $,
and define
$$
 f_\la = \argmin_{f \in \hh} \rr_\ell(f) + \la \| f \|_\hh,
 \quad \la \ge 0 .
$$
If $\hh$ has kernel $ K : \xx \times \xx \to \R $ such that $ \sup_x K(x,x) \le \ka^2 $,
$\hh$ is continuously embedded in the space of bounded continuous functions on $\xx$,
with $ \| \cdot \|_\infty \le \ka \| \cdot \|_\hh $.
Hence, the uniform bounds \ref{it:fla-fl}, \ref{it:hfla-fla} may be derived from bounds in the RKHS norm.
The estimate $\widehat{f}_\la$ can be computed with a variety of methods,
such as empirical risk minimization (ERM)~\cite{scholkopf2002learning}, gradient descent (GD)~\cite{yao07} and stochastic gradient descent (SGD)~\cite{robbinsmonro}.

\begin{lem} \label{lem:marginfla}
Suppose \eqref{eq:hard-marginf} holds true with $ f = f_\ell $ and $ \de = \ga $.
Then, under the assumption \ref{it:fla-fl}, there is $\la_*$ such that
\eqref{eq:D-margin} holds true with $ f = f_\la $ and $ \de = \ga/2 $ for every $ \la \le \la_* $.
\end{lem}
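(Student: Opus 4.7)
The plan is to reduce this to a short continuity (perturbation) argument around $f_\ell$. First, since $\ell$ is Fisher consistent we have $Df_\ell = c_*$ almost surely, so the assumed hard margin \eqref{eq:hard-marginf} for $f_\ell$ with constant $\ga$ combines with \eqref{eq:Df=c*} and gives, via Lemma~\ref{lem:D-margin},
\[
\min_{y \ne c_*(X)} \langle f_\ell(X), c_*(X) - y \rangle \ge \ga \quad \text{a.s.}
\]
This is the starting estimate to which I would attach a small perturbation to pass from $f_\ell$ to $f_\la$.

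Second, for every $y \ne c_*(X)$ I would split
\[
\langle f_\la(X), c_*(X) - y \rangle = \langle f_\ell(X), c_*(X) - y \rangle + \langle f_\la(X) - f_\ell(X), c_*(X) - y \rangle,
\]
and bound the perturbation by Cauchy--Schwarz. Since both $c_*(X)$ and $y$ are simplex vertices with $\|c_*(X)\| = \|y\| = 1$ and $\langle c_*(X), y \rangle = -1/(T-1)$, one has $\|c_*(X) - y\|^2 = 2T/(T-1)$. Hence the second inner product is at least $-\sqrt{2T/(T-1)}\,\|f_\la - f_\ell\|_\infty$, uniformly in $y \ne c_*(X)$. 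Taking the minimum,
\[
\min_{y \ne c_*(X)} \langle f_\la(X), c_*(X) - y \rangle \;\ge\; \ga - \sqrt{\tfrac{2T}{T-1}}\,\|f_\la - f_\ell\|_\infty .
\]

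Third, assumption \ref{it:fla-fl} gives $\|f_\la - f_\ell\|_\infty \to 0$, so I pick $\la_*$ such that $\|f_\la - f_\ell\|_\infty \le (\ga/2)\sqrt{(T-1)/(2T)}$ for all $\la \le \la_*$. Substituting into the previous display yields a lower bound of $\ga/2$, which is exactly \eqref{eq:D-margin} with $f = f_\la$ and $\de = \ga/2$.

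The only real subtlety is interpreting $\|\cdot\|_\infty$ on $L^\infty(\xx,\R^{T-1})$ as the essential supremum of the Euclidean norm, so that the pointwise inequality $\|f_\la(X) - f_\ell(X)\| \le \|f_\la - f_\ell\|_\infty$ holds almost surely and survives the minimum over $y$; once this is in place, the argument is a routine triangle/Cauchy--Schwarz estimate, and no additional structural assumption on $f_\la$ is needed beyond \ref{it:fla-fl}.
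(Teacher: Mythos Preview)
Your proof is correct and follows essentially the same perturbation argument as the paper. The only cosmetic differences are that the paper bounds the inner products with the unit vectors $y_*$ and $y$ separately (incurring a constant $2$ rather than your sharper $\sqrt{2T/(T-1)}$) and invokes Lemma~\ref{lem:D-margin} at the end rather than at the start.
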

\begin{proof}
 Let $ Df_\ell(x) = y_* = c_*(x) $ (recall that $\ell$ is Fisher consistent), and let
 $$
  a = \langle f_\la(x) , y_* \rangle - \max_{y \ne y_*} \langle f_\la(x) , y \rangle .
 $$
 Then
 $$
  a
  = \langle f_\ell(x) , y_* \rangle - \underbracket{\langle f_\ell(x) - f_\la(x) , y_* \rangle}_{b} - \underbracket{\max_{y \ne y_*} \langle f_\la(x) , y \rangle}_{c} ,
 $$
where
 $
  b \le \| f_\ell - f_\la \|_\infty ,
 $
 and
 \begin{align*}
  c & = \max_{y \ne y_*} ( \langle f_\ell(x) , y \rangle + \langle f_\la(x) - f_\ell(x) , y \rangle ) \\
  & \le \max_{y \ne y_*} \langle f_\ell(x) , y \rangle + \max_{y \ne y_*} \langle f_\la(x) - f_\ell(x) , y \rangle \\
  & \le \max_{y \ne y_*} \langle f_\ell(x) , y \rangle + \| f_\la - f_\ell \|_{\infty} .
 \end{align*}
 In view of \ref{it:fla-fl}, there is $\la_*$ such that $ \| f_\la - f_\ell \|_\infty \le \ga/4 $.
 Hence, for every $ \la \le \la_* $, using \eqref{eq:hard-marginf} we obtain
 \begin{align*}
  a
  & \ge \langle f_\ell(x) , y_* \rangle - \ga/4 - \max_{y \ne y_*} \langle f_\ell(x) , y \rangle - \ga/4 \\
  & = M(f_\ell(x)) - \ga/2 \ge \ga - \ga/2  = \ga/2 .
 \end{align*}
 This implies \eqref{eq:Df=c*}, and thus \eqref{eq:hard-marginf}, for $ f = f_\la $ and $ \de = \ga/2 $.
 The assertion now follows from \Cref{lem:D-margin}.
\end{proof}

\paragraph{Square loss.}

We can now state our first main result.
\begin{thm} \label{thm:main1}
 Suppose the hard-margin condition
 $$
  \min_{y \ne c_*(X)} \rho( c_*(X) \mid X )  - \rho( y \mid X ) \ge \de \quad \text{almost surely} .
 $$
Then, under the assumptions \ref{it:fla-fl}, \ref{it:hfla-fla}, there is $\la_*$ such that, for every $ \la \le \la_* $,
$$
 \E | \rr(D\widehat{f}) - \rr(Df') | \lesssim \exp(-n \de^2 \la / b^2) .
$$
\end{thm}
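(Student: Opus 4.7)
The plan is to chain together the lemmas of Sections~\ref{sec:bias-variance}--\ref{sec:comparison}: start from the hard-margin hypothesis on conditional probabilities, transfer it to $f_\ell=\eta$, then to $f_\la$ for small $\la$, kill both the bias and the Fisher terms of the decomposition \eqref{eq:variance}--\eqref{eq:fisher}, and conclude by the exponential bound in \Cref{prop:poly-exp}.

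\textbf{Step 1 (transfer the margin to $f_\ell$).} For the square loss, $f_\ell=\eta$. By \Cref{lem:margin}, $D\eta=c_*$ almost surely (so Fisher consistency holds and \eqref{eq:fisher} is zero), and the assumed hard-margin condition on the conditional probabilities is equivalent to $M(\eta(X))\ge\de$ a.s., i.e.\ \eqref{eq:hard-marginf} with $f=f_\ell$ and margin $\de$.

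\textbf{Step 2 (transfer the margin to $f_\la$).} Apply \Cref{lem:marginfla} with $\ga=\de$: using assumption \ref{it:fla-fl} to make $\|f_\la-f_\ell\|_\infty$ as small as needed, we obtain a threshold $\la_*$ such that, for all $\la\le\la_*$, the function $f_\la$ satisfies \eqref{eq:D-margin} with $\de/2$. By \Cref{lem:D-margin} this is equivalent to saying simultaneously that $Df_\la=c_*$ a.s.\ and $M(f_\la(X))\ge\de/2$ a.s.

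\textbf{Step 3 (kill the bias and apply the variance bound).} Since $Df_\la=c_*=Df_\ell$ a.s., the bias term \eqref{eq:bias} is exactly zero, and together with Step~1 the decomposition in Section~\ref{sec:bias-variance} collapses to \eqref{eq:misc}:
\[
 \rr(D\widehat{f}_\la)-\rr_* \;=\; \rr(D\widehat{f}_\la)-\rr(Df_\la).
\]
Now assumption \ref{it:hfla-fla} is precisely the concentration hypothesis \eqref{eq:inf<exp} required by \Cref{prop:poly-exp}, and Step~2 gives the hard-margin hypothesis \eqref{eq:hard-marginf} for $f_\la$ with margin $\de/2$. Plugging these into the second (exponential) estimate of \Cref{prop:poly-exp} yields
\[
 \E\,|\rr(D\widehat{f}_\la)-\rr(Df_\la)| \;\lesssim\; \exp\!\bigl(-n(\de/2)^2/b^2\bigr),
\]
and absorbing the constant $1/4$ into the hidden $\lesssim$ delivers the claimed exponential rate.

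\textbf{Main obstacle.} There is no single hard step; the argument is an assembly of already established facts. The only delicate point is making sure that the threshold $\la_*$ comes solely from \ref{it:fla-fl} (so that once $\la\le\la_*$ the bias is strictly zero and $\la$ plays no further role in the bound), and that the margin for $f_\la$ is $\de/2$ rather than $\de$ — this halving is what forces the use of $\la_*$ and is the only place where the approximation assumption \ref{it:fla-fl} is invoked. Everything else is a direct citation.
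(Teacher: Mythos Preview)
Your proof is correct and follows exactly the same chain as the paper: \Cref{lem:margin} to pass from the probabilistic hard margin to $M(\eta(X))\ge\de$, then \Cref{lem:marginfla} together with \Cref{lem:D-margin} to obtain $Df_\la=c_*$ and $M(f_\la(X))\ge\de/2$ for $\la\le\la_*$, and finally the exponential case of \Cref{prop:poly-exp}. One nitpick: a multiplicative constant inside the \emph{exponent} cannot be absorbed into the $\lesssim$ prefactor; the paper is equally loose on this point, simply writing ``the claim follows from \Cref{prop:poly-exp}.''
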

\begin{proof}
First, recall that, thanks to \Cref{lem:margin}, $ D\eta = c_* $
and $ M(\eta(X)) \ge \de $ almost surely.
Moreover, by \Cref{lem:marginfla} (and \Cref{lem:D-margin}), we have $ Df_\la = c_* $ and $ M(f_\la(X)) \ge \de/2 $ almost surely for $\la\le\la_*$.
Thus, \eqref{eq:misc} holds true,
and the claim follows from \Cref{prop:poly-exp}.
\end{proof}

\paragraph{Margin losses.}

We now consider surrogate losses of the form
$$
 \ell_\phi(w,y) = \phi( \langle w , y \rangle )
$$
for some scalar function $ \phi : \R \to [0,\infty) $.
We denote the minimizer of the corresponding risk $ \rr_\phi(f) = \E \phi( \langle f(X) , Y \rangle ) $ by
$$
 f_\phi = \argmin_{f \in L^0(\xx,\R^{T-1})} \rr_\phi (f) .
$$
Following and generalizing the analysis of \cite{Zhang2004,nitanda2019},
we want to extract an inner risk from $ \rr_\phi $.
The idea is to expand
$$
 \rr_\phi(f) = \E_X \sum_{y\in\yy} \phi( \langle f(X) , y 
 \rangle ) \rho(y \mid X)
$$
and isolate the argument of $ \E_X $ removing the dependence on $X$.
Recalling the definition of $ \Delta$ in \Cref{lem:margin}, we introduce the inner risk
$$
 \Phi(p,w) = \sum_{y\in\yy} \phi(\langle w , y \rangle)  p_y , \qquad p \in \Delta, w \in \R^{T-1} ,
$$
and the inner risk minimizer
$$
 h_\phi : \Delta \to \R^{T-1} ,
 \qquad
 h_\phi(p) = \argmin_{w\in\R^{T-1}} \Phi(p,w) .
$$
Note that, denoting $ p(x)_y = \rho(y \mid x) $, we have
\begin{equation} \label{eq:fx=hpx}
 f_\phi(x) = h_\phi(p(x)) .
\end{equation}
In the following, we will be assuming that
\begin{enumerate}[resume,label=\textnormal{(\roman*)}]
 \item $ \ell_\phi $ is Fisher consistent; \label{it:fisher}
 \item $ \langle h_\phi(p) , y \rangle $ is a non-decreasing function of $p_y$ . \label{it:hphipy}
\end{enumerate}
As previously mentioned, losses satisfying \ref{it:fisher} are indeed abundant.
For a general characterization of Fisher consistency in the framework of simplex encoded classification,
we refer to \cite{NIPS2012_1cecc7a7}.
The requirement \ref{it:hphipy} is easily met by many functions $\phi$,
as the next lemma shows.
Essentially, it is sufficient for the loss to be decreasing and convex.
Notable examples of $\phi$ satisfying both \ref{it:fisher} and \ref{it:hphipy} are the logistic loss $ \phi(t) = \ln(2)^{-1} \ln ( 1 + e^{-t} ) $,
and the exponential loss $ \phi (t) = e^{-t} $.
\begin{lem}
 Suppose $\phi$ is twice differentiable, non-increasing and convex.
 Then \ref{it:hphipy} holds true.
\end{lem}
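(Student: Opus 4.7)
The plan is to reduce the claim to a single directional-derivative computation along a path in the simplex that isolates $p_y$ while leaving the ratios of the other components unchanged, and then to read off non-negativity directly from the sign assumptions $\phi' \le 0$ (non-increasing) and $\phi'' \ge 0$ (convex).

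First, I would parametrize $p(t) = (1-t) p + t e_y$ for $t \in [0,1)$, so that $p(t)_y = p_y + t(1-p_y)$ is strictly increasing in $t$ while $p(t)_{y'} = (1-t) p_{y'}$ for $y' \ne y$ preserves the mutual ratios of the remaining components. It would therefore suffice to show that $t \mapsto \langle w(t), y \rangle$ is non-decreasing, where $w(t) := h_\phi(p(t))$ is characterized by the first-order optimality condition
$$
 \sum_{y' \in \yy} \phi'(\langle w(t), y' \rangle)\, p(t)_{y'}\, y' = 0 .
$$

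Next I would differentiate this identity in $t$. The key observation is that $\dot p(t) = (e_y - p(t))/(1-t)$, so the $p(t)$-part of $\dot p$ is annihilated by the first-order condition itself and only the $y$-contribution survives, giving
$$
 H(t)\, \dot w(t) \;=\; -\tfrac{1}{1-t}\, \phi'(\langle w(t), y \rangle)\, y ,
$$
where $H(t) := \sum_{y' \in \yy} \phi''(\langle w(t), y' \rangle)\, p(t)_{y'}\, y' y'^\top$ is positive semidefinite by $\phi'' \ge 0$. Taking the inner product with $y$ and using $\phi' \le 0$ yields
$$
 \langle \dot w(t), y \rangle \;=\; -\tfrac{\phi'(\langle w(t), y \rangle)}{1-t}\, \langle y, H(t)^{-1} y \rangle \;\ge\; 0 ,
$$
which is exactly the desired monotonicity.

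The main obstacle will be justifying the differentiability of $w(t)$ and the invertibility of $H(t)$. Since the $T$ vertices of the encoding simplex sum to zero, any $T-1$ of them form a basis of $\R^{T-1}$, so $H(t)$ is positive definite as soon as $p(t)$ is supported on at least $T-1$ vertices with $\phi'' > 0$ there, which holds for any $t \in (0,1)$ in the generic case. Degenerate situations (small support of $p$, or $\phi''$ vanishing on parts of its domain) can be handled by first perturbing $\phi$ to $\phi + \epsilon \|\cdot\|^2$, running the argument for the strictly convex surrogate, and then sending $\epsilon \to 0$ using continuity of the strictly convex argmin.
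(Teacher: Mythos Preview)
Your approach is correct and shares the paper's core idea: implicitly differentiate the first-order optimality condition $\nabla_w \Phi(p, h_\phi(p)) = 0$ and read off the sign from $\phi' \le 0$, $\phi'' \ge 0$. The executions differ in two places. First, the paper differentiates with respect to $p_y$ as a free coordinate (effectively extending $\Phi$ off the simplex), arriving directly at $H\,\partial_{p_y} h_\phi + \phi'(\langle h_\phi, y\rangle)\,y = 0$ without your path parametrization; your simplex path and the identity $\dot p(t) = (e_y - p(t))/(1-t)$ recover the same equation after the first-order condition kills the $p(t)$ part, so this difference is cosmetic. Second, and more substantively, the paper does \emph{not} invert $H$: it takes the inner product of the displayed equation with $\partial_{p_y} h_\phi$ itself, obtaining
\[
0 \;=\; \langle \partial_{p_y} h_\phi,\, H\, \partial_{p_y} h_\phi\rangle \;+\; \phi'(\langle h_\phi, y\rangle)\,\langle \partial_{p_y} h_\phi, y\rangle,
\]
from which $\langle \partial_{p_y} h_\phi, y\rangle \ge 0$ follows with only positive \emph{semi}definiteness of $H$. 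Applying the same trick to your equation $H(t)\,\dot w(t) = -\tfrac{1}{1-t}\phi'(\langle w(t),y\rangle)\,y$ (take the inner product with $\dot w(t)$) would spare you the invertibility discussion and the $\epsilon$-regularization detour entirely.
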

\begin{proof}
 Let $ \Psi(p) = \nabla_w \Phi (p,h_\phi(p)) $.
 By definition of $ h_\phi $, we have $ \Psi(p) = 0 $,
 hence $ \J \Psi(p) = 0 $ as well.
Calculating the derivatives, we have
\begin{align*}
 0 = \J \Psi(p) & = \sum_{j\in\yy} \left( \phi''(\langle h_\phi(p) , j \rangle) \ j \otimes j \ \J h_\phi(p) \ p_j
 + \phi'(\langle h_\phi(p) , j \rangle) \ j \otimes e_j \right) ,
\end{align*}
 where $e_j$ denotes the vector of $\R^T$ with $ [e_j]_y = \delta_{j,y} $.
 Thus, for all $ y \in \yy $,
 \begin{align*}
  0 = \J \Psi(p) e_y = \sum_{j\in\yy} p_j \ \phi''(\langle h_\phi(p) , j \rangle) \ j \ \langle  \tfrac{\partial h_\phi(p)}{\partial p_y} , j \rangle
   + \phi'(\langle h_\phi(p) , y \rangle) \ y ,
 \end{align*}
 and therefore
 \begin{align*}
  0 = \langle \tfrac{\partial h_\phi(p)}{\partial p_y} , \J \Psi(p) e_y \rangle
  = \sum_{j\in\yy} p_j \ \phi''(\langle h_\phi(p) , j \rangle) \ \langle  \tfrac{\partial h_\phi(p)}{\partial p_y} , j \rangle^2
  + \phi'(\langle h_\phi(p) , y \rangle) \ \langle \tfrac{\partial h_\phi(p)}{\partial p_y} , y \rangle .
 \end{align*}
Since $ \phi'' \ge 0 $ and $ \phi' \le 0 $, we must have $ \langle \tfrac{\partial h_\phi(p)}{\partial p_y} , y \rangle \ge 0 $,
which proves the claim.
\end{proof}
In order to derive exponential rates for margin losses,
we need to transfer the hard-margin condition
from the conditional probabilities to the minimizer of the margin loss.
This is the content of the following lemma.
\begin{lem} \label{lem:m(de)}
 Suppose that \eqref{eq:hard-margin} holds true with $\de=\ga$.
 Then, under the assumption \ref{it:hphipy}, \eqref{eq:hard-marginf} holds true with $f=f_\phi$
 and $\de=m(\ga)$,
 where
 $$
  m(\ga) = \max_{y,j\in\yy} \min \{ M(h_\phi(p)) : p \in \Delta , p_y - p_j = 2\ga \} .
 $$
\end{lem}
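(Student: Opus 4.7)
The plan is to transfer the hard-margin inequality from $p = p(x)$ to the decision margin of $f_\phi(x) = h_\phi(p)$, working pointwise at $x$ in the probability-one event of \eqref{eq:hard-margin}. Write $y_* = c_*(x)$, so $p_{y_*} - p_j \ge \ga$ for all $j \ne y_*$. By Fisher consistency \ref{it:fisher} applied at $p$ (combined with \eqref{eq:fx=hpx}), the simplex vertex $y_*$ is the argmax of $\langle h_\phi(p), \cdot \rangle$ over $\yy$, so $D(h_\phi(p)) = y_*$ and
$$
 M(h_\phi(p)) = \min_{j \ne y_*} \langle h_\phi(p), y_* - j \rangle .
$$
It therefore suffices to lower bound each inner product $\langle h_\phi(p), y_* - j \rangle$ by $m(\ga)$.

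For a fixed competitor $j \ne y_*$, I would use the monotonicity assumption \ref{it:hphipy} to compare $\langle h_\phi(p), y_* - j \rangle$ with the same quantity evaluated at a canonical $q = q(p,j) \in \Delta$ satisfying $q_{y_*} - q_j = 2\ga$ and with $y_*$ still maximal. A natural candidate is the two-coordinate symmetric transport $q_{y_*} = (p_{y_*}+p_j)/2 + \ga$, $q_j = (p_{y_*}+p_j)/2 - \ga$, with the coordinates outside $\{y_*, j\}$ rebalanced if needed so that $q \in \Delta$ and $y_*$ remains maximal. In the regime $p_{y_*} - p_j \ge 2\ga$ this transport decreases $p_{y_*}$ and increases $p_j$, and \ref{it:hphipy} applied coordinate-wise yields $\langle h_\phi(q), y_* \rangle \le \langle h_\phi(p), y_* \rangle$ and $\langle h_\phi(q), j \rangle \ge \langle h_\phi(p), j \rangle$, hence $\langle h_\phi(q), y_* - j \rangle \le \langle h_\phi(p), y_* - j \rangle$. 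The complementary regime $\ga \le p_{y_*} - p_j < 2\ga$ would be handled by a variant of the transport, or equivalently by restricting attention to the competitor $j$ that realises the decision margin of $h_\phi(p)$, for which the monotone direction becomes favourable.

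Once $q \in \Delta$ is constructed with $y_*$ maximal, Fisher consistency gives $D(h_\phi(q)) = y_*$ and therefore $M(h_\phi(q)) \le \langle h_\phi(q), y_* - j \rangle$. Chaining,
$$
 \langle h_\phi(p), y_* - j \rangle \ge \langle h_\phi(q), y_* - j \rangle \ge M(h_\phi(q)) \ge \mu(y_*, j) ,
$$
where $\mu(y, j) = \min\{M(h_\phi(q')) : q' \in \Delta,\ q'_y - q'_j = 2\ga\}$. Taking the maximum over the choice of pair $(y, j)$ --- which is admissible in the outer maximization defining $m(\ga)$ --- then yields $M(h_\phi(p)) \ge m(\ga)$, which is exactly \eqref{eq:hard-marginf} for $f = f_\phi$ and $\de = m(\ga)$.

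The main obstacle is the explicit transport construction: one must ensure $q$ remains in $\Delta$ with nonnegative entries, that $y_*$ stays maximal so Fisher consistency continues to apply at $q$, and that \ref{it:hphipy} can be invoked along the path from $p$ to $q$ in the monotone direction compatible with the desired inequality. The factor $2\ga$ in the definition of $m(\ga)$ is precisely the symmetric gap that arises from centring $(p_{y_*}, p_j)$ on their mean $(p_{y_*}+p_j)/2$, which is the natural normalisation that makes the monotone transport work.
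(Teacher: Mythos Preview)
Your overall strategy is the paper's: identify $D(h_\phi(p(x)))=c_*(x)$, use the monotonicity assumption \ref{it:hphipy} to transport $p(x)$ to a point on the constraint set $\{q_{y_*}-q_j=2\ga\}$ so that the gap $\langle h_\phi(\cdot),y_*-j\rangle$ can only shrink, and then bound from below by the inner minimum defining $m(\ga)$. The paper's proof is in fact terser than yours --- it asserts existence of the transported point in a single sentence --- so you are not missing any idea the paper supplies; if anything you are more explicit (you also invoke Fisher consistency \ref{it:fisher}, which the paper's argument needs but does not cite in this proof).

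That said, the step you flag as ``the main obstacle'' is a genuine gap that your write-up does not close. The hypothesis \eqref{eq:hard-margin} gives only $p(x)_{y_*}-p(x)_j\ge\ga$, not $\ge 2\ga$; in the regime $\ga\le p(x)_{y_*}-p(x)_j<2\ga$ your symmetric transport moves $q_{y_*}$ \emph{up} and $q_j$ \emph{down}, so \ref{it:hphipy} points the wrong way, and your proposed fix of restricting to the margin-realising $j$ does not help since that $j$ still enjoys only the $\ga$ bound. Moreover, \ref{it:hphipy} controls only how $\langle h_\phi(p),y\rangle$ depends on $p_y$, saying nothing about its dependence on other coordinates, so rerouting mass through third classes is unjustified. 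The paper's one-line existence claim does not address this point either, so you are no worse off than the paper here --- but the gap remains. Separately, your final step ``taking the maximum over the choice of pair $(y,j)$'' is not a valid lower bound as written, because the pair $(y_*,j)$ is determined by $p(x)$, not free; what actually makes $\mu(y_*,j)=m(\ga)$ is the permutation symmetry of $\Phi$ in the simplex coordinates, which forces $\mu(y,j)$ to be independent of the pair. You should say this explicitly (the paper's ``taking the maximum over $y$ and $j$'' relies on the same unspoken symmetry).
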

\begin{proof}
 Let $ p(X)_y = \rho(y \mid X) $. By \eqref{eq:fx=hpx} we have
 $$
  M(f_\phi(X)) = M(h_\phi(p(X))) .
 $$
Let $ y, j \in \yy $ be such that
 $$
  M(h_\phi(p(X))) = \underbracket{\langle h_\phi(p(X)) , y \rangle}_{a} - \underbracket{\langle h_\phi(p(X)) , j \rangle}_{b} .
 $$
 In view of \eqref{eq:hard-margin} and \ref{it:hphipy}, there is $ p \in \Delta $ with $ p_y - p_j = 2\de $ such that $a$ decreases and $b$ increases, hence
 \begin{equation*}
  M(h_\phi(p(X))) \ge M(h_\phi(p)) .
 \end{equation*}
 Taking the minimum over such a $p$
 and the maximum over $y$ and $j$,
 we obtain the assertion.
\end{proof}
To visualize the lower bound $m(\ga)$, note that, for $T=2$,
it corresponds to $ \max \{ h_\phi(1/2+\ga) , -h_\phi(1/2-\ga) \} $
(cf.~with \cite{nitanda2019}).

We can finally prove our main result for margin losses.
\begin{thm} \label{thm:main2}
 Suppose the hard-margin condition
 $$
  \min_{y \ne c_*(X)} \rho( c_*(X) \mid X )  - \rho( y \mid X ) \ge \de \quad \text{almost surely} .
 $$
 Then, under the assumptions \ref{it:fla-fl}$\div$\ref{it:hphipy},
 there is $\la_*$ such that, for every $ \la \le \la_* $,
$$
 \E | \rr(D\widehat{f}) - \rr(Df') | \lesssim \exp(-n \ m(\de)^2 \la / b^2) ,
$$
where $m(\de)$ is defined in \Cref{lem:m(de)}.
\end{thm}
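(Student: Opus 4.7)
The plan is to follow the same pipeline used for Theorem \ref{thm:main1} but with one extra step at the beginning, since in the margin-loss case the surrogate target is $f_\phi$ rather than the regression function $\eta$, and so the hard-margin assumption on the conditional probabilities does not translate directly to a margin condition on $f_\phi$. The bulk of the argument is a chaining of lemmas already proved.

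First I would apply Lemma \ref{lem:m(de)} with $\gamma = \delta$: combined with assumption \ref{it:hphipy}, it gives that $f_\phi$ satisfies the hard-margin condition \eqref{eq:hard-marginf} with margin $m(\delta)$ in place of $\delta$. By Fisher consistency \ref{it:fisher} we also have $Df_\phi = c_*$ almost surely, so Lemma \ref{lem:D-margin} upgrades this to the geometric inequality \eqref{eq:D-margin} for $f = f_\phi$ and $\delta = m(\delta)$.

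Next, I would invoke Lemma \ref{lem:marginfla} with $f_\ell = f_\phi$ and $\gamma = m(\delta)$; its hypotheses are precisely the Fisher consistency established above and the uniform approximation assumption \ref{it:fla-fl}. This produces a threshold $\lambda_*$ such that, for every $\lambda \le \lambda_*$, the approximator $f_\lambda$ satisfies \eqref{eq:D-margin} with margin $m(\delta)/2$. In particular $Df_\lambda = c_* = Df_\phi$ almost surely, so both \eqref{eq:fisher} and the bias term \eqref{eq:bias} vanish, and the excess misclassification risk reduces to the variance term \eqref{eq:misc}, namely $\rr(D\hat f_\lambda) - \rr_* = \rr(D\hat f_\lambda) - \rr(Df_\lambda)$.

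Finally, since $f_\lambda$ satisfies \eqref{eq:hard-marginf} with parameter $m(\delta)/2$ and $\hat f_\lambda$ concentrates uniformly around $f_\lambda$ by \ref{it:hfla-fla}, I would close the argument by invoking Proposition \ref{prop:poly-exp} in its exponential (hard-margin) form. The resulting bound is of the form $\exp(-n\, m(\delta)^2/(c\,b^2))$ for an absolute constant $c$, which matches the claim up to constants absorbed by $\lesssim$. There is no genuinely hard step here: all the technical content lives in the auxiliary lemmas (in particular Lemma \ref{lem:m(de)}, which does the nontrivial work of pushing the margin from the probability simplex to the range of $h_\phi$), and the only care needed in the main proof is to invoke them in the right order and with the right margin parameter at each stage.
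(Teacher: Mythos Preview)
Your proposal is correct and follows essentially the same route as the paper: invoke Fisher consistency \ref{it:fisher} and Lemma~\ref{lem:m(de)} to obtain the hard margin $m(\de)$ for $f_\phi$, then use Lemma~\ref{lem:marginfla} (via Lemma~\ref{lem:D-margin}) to push a halved margin to $f_\la$ for $\la\le\la_*$, conclude that \eqref{eq:misc} holds, and finish with the exponential case of Proposition~\ref{prop:poly-exp}. Your write-up is somewhat more explicit about the role of Lemma~\ref{lem:D-margin} in the passage from \eqref{eq:hard-marginf} to \eqref{eq:D-margin} and back, but the chain of lemmas and the order in which they are applied match the paper exactly.
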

\begin{proof}
By assumption \ref{it:fisher}, we have $ Df_\phi = c_* $ almost surely.
Moreover, thanks to \Cref{lem:m(de)}, we have $ M(f_\phi(X)) \ge m(\de) $ almost surely.
Now, \Cref{lem:marginfla} (together with \Cref{lem:D-margin}) gives that
$ Df_\la = c_* $ and $ M(f_\la(X)) \ge m(\de)/2 $ almost surely for $ \la \le \la_* $.
Therefore, we have \eqref{eq:misc},
and \Cref{prop:poly-exp} yields the result.
\end{proof}

The critical value $\la_*$ in \Cref{thm:main1} and \Cref{thm:main2}
can be quantified in presence of additional assumptions on the distribution.
For example, consider the case of a kernel ridge regression estimator in a separable RKHS $\hh$.
Suppose that the kernel $K:\xx\times\xx\to\R$ is bounded by $\ka$,
and define the covariance operator as
$$
 T : \hh \to \hh , \qquad T = \int_\xx K_x \otimes K_x d\rho(x) ,
$$
where $ K_x = K(\cdot,x) $ and $\rho$ is the marginal distribution on $\xx$.
Further, suppose there exist $ g \in \hh $ and $ s \in (0,1/2] $ such that
$$
f_\ell = T^s g .
$$
This is known as the \emph{source} condition,
and it corresponds to assuming Sobolev smoothness of the regression function.
Then, it can be proved that \cite{caponnetto2007}
$$
 \| f_\la - f_\ell \|_\hh \le \la^s \| g \|_\hh .
$$
As a consequence, $ \la_* $ in \Cref{lem:marginfla}, and therefore in \Cref{thm:main1}, may be picked as $ \la_* = (\de/4\ka\|g\|_\hh)^{1/s} $.

We finally remark that analogous results to \Cref{thm:main1} and \Cref{thm:main2}
may be proved under the soft-margin condition \eqref{eq:margin},
using the polynomial bound of \Cref{prop:poly-exp}.

\section{Experiments}
\label{sec:experiments}

This section is concerned with empirically verifying the theoretical analysis presented in Section~\ref{sec:analysis}. We will first consider a classification problem where the true function satisfies the hard-margin condition (defined in Eq.~\eqref{eq:hard-marginf}), and show how -- under optimization of a surrogate loss by gradient descent -- the misclassification loss decreases more quickly than the surrogate loss.
Then we will take into account a different synthetic dataset, where the weaker soft-margin or low noise condition (see Eq.~\eqref{eq:marginf}) is satisfied. We will verify how the rate of change of the misclassification error with the number of points in the dataset adheres to the theoretical rates.

\begin{figure}[h]
	\centering
	\includegraphics[width=0.5\linewidth]{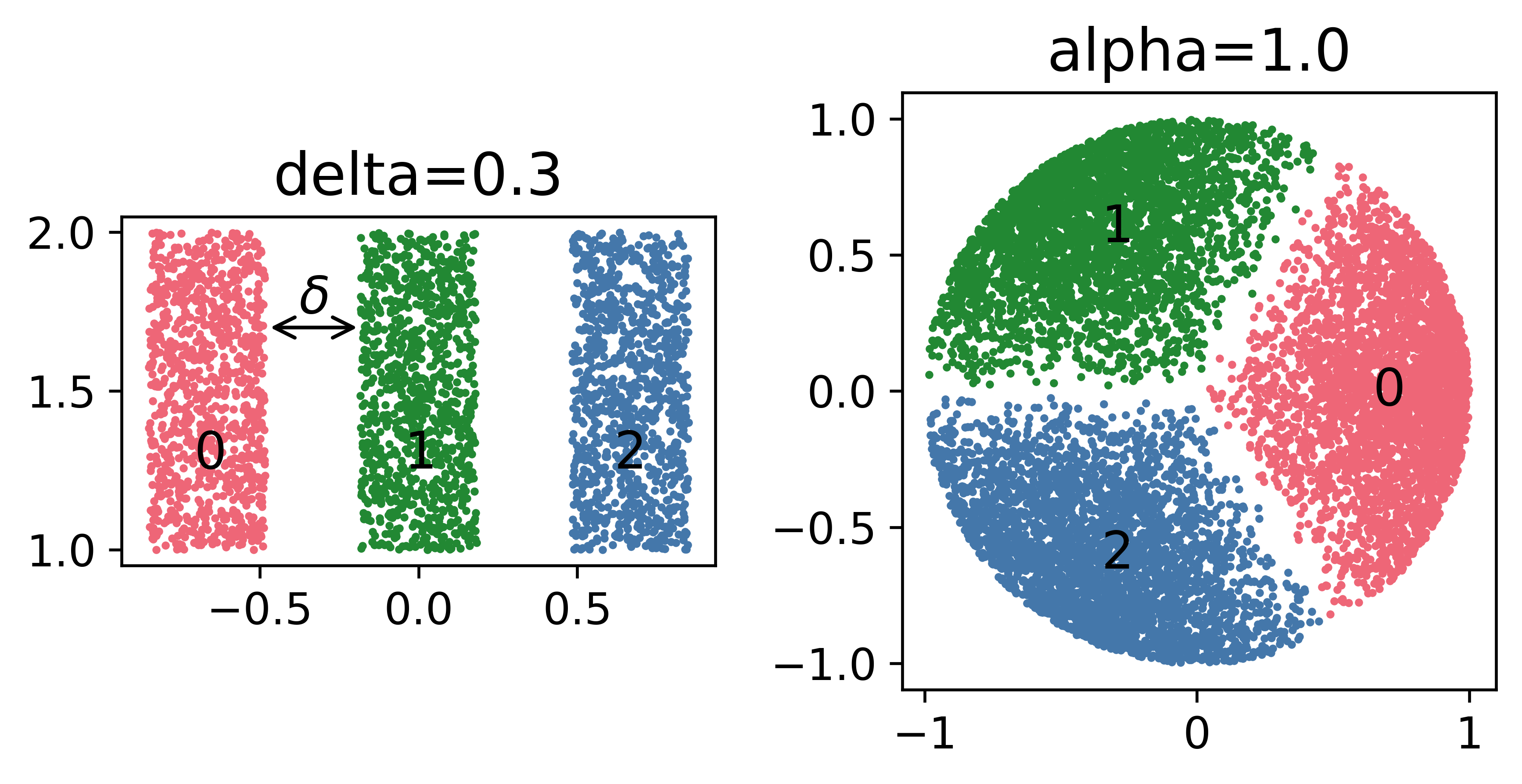}
	\caption{Sample datasets. In the left panel, the three classes are separated by a hard margin of length $\delta$. In the right panel there is no hard margin, but the probability of a point falling close to the boundary is decreasing (soft-margin).}
	\label{fig:data}
\end{figure}

Initially we compare three different surrogate loss functions: the logistic, the exponential and the square loss. We generated data in two dimensions such that the hard-margin condition holds with margin $\delta$, see Figure~\ref{fig:data}, left panel for a sample dataset. A random Fourier features (RFF) model~\cite{rahimi08} approximates potentially infinite dimensional feature maps in a reproducing kernel Hilbert space (RKHS) using finite dimensional randomized maps: given a kernel function $k(x, x') = \langle\psi(x), \psi(x')\rangle_{\mathcal{H}}$ the feature map $\psi\in\mathcal{H}$ can be approximated with function $z: \mathbb{R}^D \rightarrow \mathbb{R}^R$ such that $\langle \psi(x), \psi(x')\rangle_{\mathcal{H}} \approx \langle z(x), z(x') \rangle_{\mathbb{R}}$. Finally $z(x)$ can be used instead of the sample itself in a linear model with parameters $w\in\mathbb{R}^R$: $f(x) = w^\top z(x)$.
In order to learn the parameters $w$ we minimize the regularized surrogate loss with gradient descent.
In Figure~\ref{fig:losses} we plot the 0-1 error, as well as the surrogate losses on unseen data as a function of the optimization epoch. A separate model was trained for each of the three surrogates 20 times with a new synthetic dataset. The intuition behind exponential rates in hard-margin classification can be verified by noting how the 0-1 loss converges at a much faster pace than the surrogate: from another perspective, when the 0-1 loss is zero the surrogate loss can still decrease for many epochs. We can further notice how not all surrogates are equal: for both the small ($\delta = 0.1$) and the larger margin ($\delta = 0.2$), the square loss leads to faster convergence of the 0-1 error than both exponential and logistic losses.

\begin{figure}[h]
	\centering
	\includegraphics[width=0.7\linewidth]{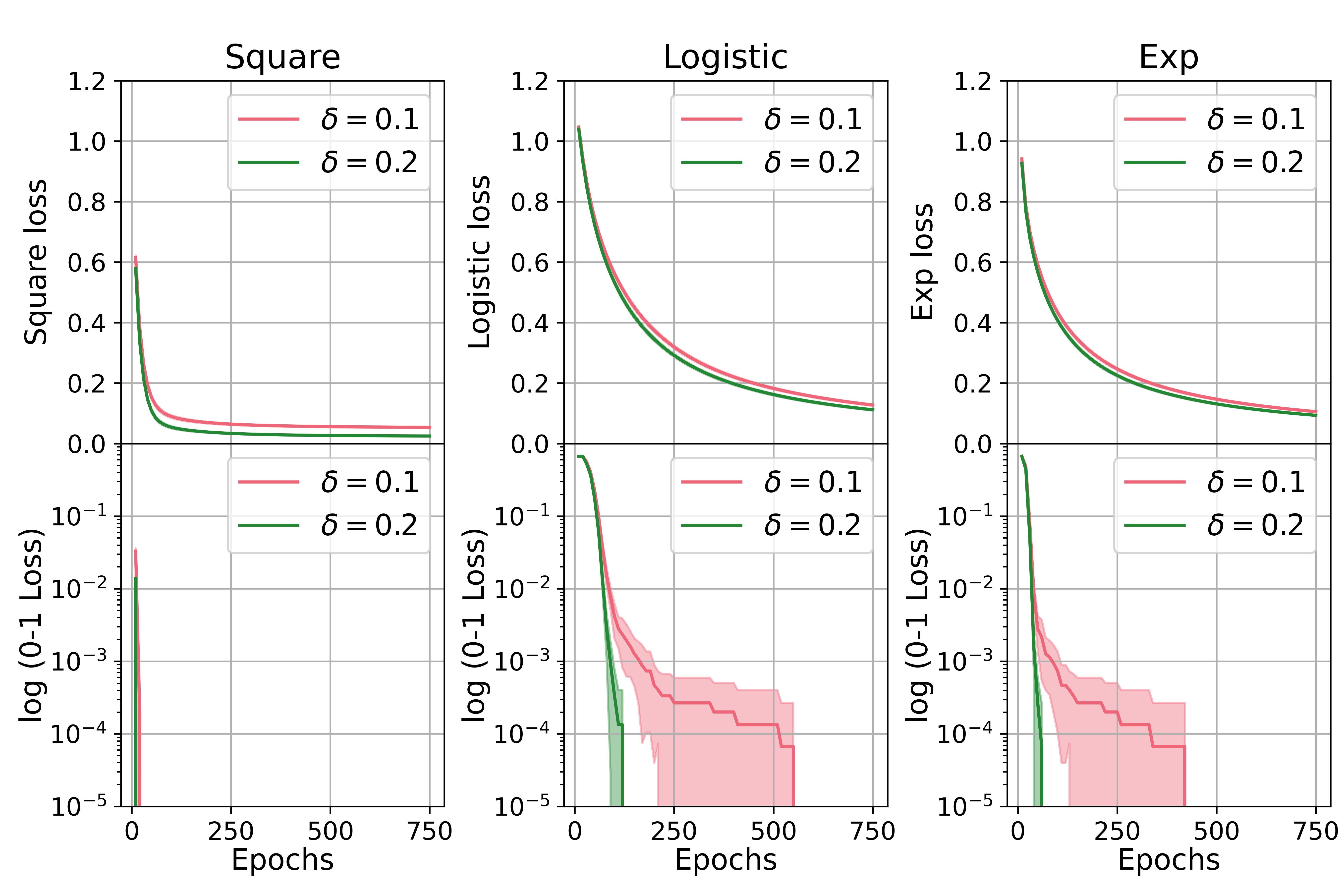}
	\caption{Optimization curves on hard-margin classification with different surrogate losses. Each panel contains two curves calculated on datasets with different margins $\delta$. The top row shows the surrogate loss, the bottom row shows the 0-1 loss.}
	\label{fig:losses}
\end{figure}

%

For the second experiment, we generated a synthetic dataset in two dimensions and with three classes such that the probability of a point falling close to the decision boundary decreases with the distance to the boundary itself as $M^\alpha$ for margins $M$ smaller than 1 (see \eqref{eq:marginf} and \Cref{fig:data}, right panel). We then used a linear model, trained by minimizing the regularized logistic loss with gradient descent until convergence. 
We repeated the experiment 100 times for datasets generated with five different values of $\alpha$ (a higher $\alpha$ results in an easier problem), and an increasing number of points, and recorded the average 0-1 loss over unseen data. 
We then plot the 0-1 loss against the number of points for each value of $\alpha$, and observe that the trends are approximately linear on a log-log plot (see Figure~\ref{fig:soft-rates}). We fit a straight line for each $\alpha$, and look at how the slope of this line changes with $\alpha$. From Proposition~\ref{prop:poly-exp} we expect the error to drop more rapidly with higher $\alpha$; in particular the rate of decrease is predicted to be $n^{-\alpha/2}$ ignoring constant and logarithmic factors. By plotting the slopes of the error rates we obtain a straight line with slope $-0.35$, which is close to the prediction of $-0.5$ (see the inset on Figure~\ref{fig:soft-rates}).

\begin{figure}[h]
	\centering
	\includegraphics[width=0.5\linewidth]{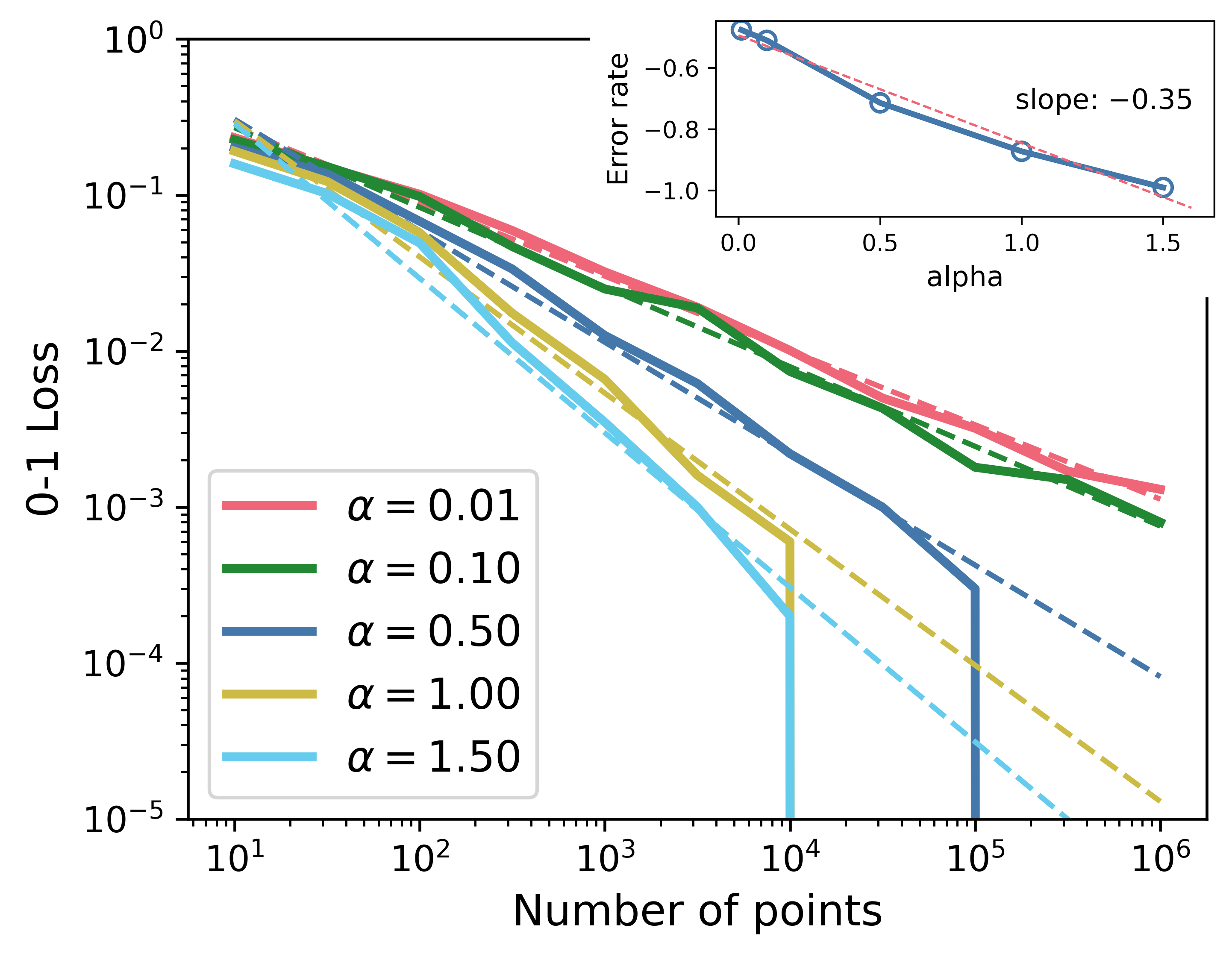}
	\caption{\emph{Main figure}: error rates for multiclass classification with polynomial soft-margin with increasing dataset size. \emph{Inset}: Linear rate of convergence of the error with $\alpha$.}
	\label{fig:soft-rates}
\end{figure}

\section{Conclusions}
\label{sec:conclusions}

In this paper we have shown how, under the hard-margin condition and
for a very general framework which encompasses many different models and surrogate losses, 
the multiclass classification error exhibits exponentially fast convergence.
Along the way we have provided an error decomposition where the bias term disappears.
This kind of result fits with the recent empirical observations of how even highly 
overparametrized models do not overfit the training data.
Our analysis can be experimentally verified for several losses, 
and different margin conditions.

Several possible extensions of this work have been left for future work.
Beyond the hard-margin and low-noise conditions,
robustness with respect to different kinds of noise may be studied.
The explicit application of our bounds to specific models --
which was sketched in this paper for kernel ridge regression --
could be especially interesting for (deep) neural networks, 
for which fast convergence on classification problems has been ascertained.
Indeed, for the latter models, the interplay of exponential convergence
and overparameterization is a further topic of great interest.



\printbibliography

\end{document}